\crefname{section}{Section}{Sections}
\crefname{theorem}{Theorem}{Theorems}
\crefname{lemma}{Lemma}{Lemmas}
\crefname{equation}{Equation}{Equations}
\crefname{proposition}{Proposition}{Propositions}
\crefname{claim}{Claim}{Claims}
\crefname{appendix}{Appendix}{Appendices}
\crefname{algorithm}{Algorithm}{Algorithms}
\crefname{figure}{Figure}{Figures}
\crefname{table}{Table}{Tables}
\crefname{remark}{Remark}{Remarks}
\crefname{definition}{Def.}{Definitions}
\crefname{corollary}{Corollary}{Corollaries}
\definecolor{cite_color}{HTML}{114083}
\definecolor{link_color}{RGB}{0,102,102}
\definecolor{link_color}{RGB}{153, 0,0}  
\definecolor{url_color}{RGB}{153, 102,  0}
\definecolor{emp_color}{RGB}{0,0,255}
\DeclarePairedDelimiterX{\infdivx}[2]{(}{)}{%
  #1\;\delimsize\|\;#2%
  }
\newcommand{\infdiv}{\operatorname{KL}\infdivx}
\def \x{\mathbf{x}}
\def \y{\mathbf{y}}
\def \z{\mathbf{z}}
\def \xib{\boldsymbol{\xi}}
\def \ED{\mathrm{ED}}
\def \data{\mathrm{data}}
\def \ebm{\mathrm{ebm}}
\def \e{\mathbf{e}}
\providecommand{\customgenericname}{}
\newcommand{\newcustomtheorem}[2]{%
  \newenvironment{#1}[1]
  {%
   \renewcommand\customgenericname{#2}%
   \renewcommand\theinnercustomgeneric{##1}%
   \innercustomgeneric
  }
  {\endinnercustomgeneric}
}
\DeclareMathOperator*{\argmin}{argmin}
\newcommand\hinterval{0.23cm}
\newcommand\samplempwid{0.146}
\newcommand\samplehinterval{0.23cm}
\newcommand\samplefigwid{\textwidth}
\title{Training Discrete Energy-Based Models with \\ Energy Discrepancy}
\author{
Tobias Schröder$^1$\thanks{Correspondence to: Tobias Schröder, \texttt{t.schroeder21@imperial.ac.uk}}, \, Zijing Ou$^1$\thanks{Code: \url{https://github.com/J-zin/discrete-energy-discrepancy}, \texttt{z.ou22@imperial.ac.uk}}, \, Yingzhen Li$^1$, \, Andrew Duncan$^{1,2}$\\
$^1$ Imperial College London, UK, $^2$ The Alan Turing Institute, UK\\
\texttt{\{t.schroeder21, z.ou22, yingzhen.li, a.duncan\}@imperial.ac.uk}\\
\setcounter{footnote}{2}
}
\begin{document}

\maketitle
\begin{abstract}
Training energy-based models (EBMs) on discrete spaces is challenging because sampling over such spaces can be difficult. We propose to train discrete EBMs with energy discrepancy (ED), a novel type of contrastive loss functional which only requires the evaluation of the energy function at data points and their perturbed counter parts, thus not relying on sampling strategies like Markov chain Monte Carlo (MCMC). Energy discrepancy offers theoretical guarantees for a broad class of perturbation processes of which we investigate three types: perturbations based on Bernoulli noise, based on deterministic transforms, and based on neighbourhood structures. We demonstrate their relative performance on lattice Ising models, binary synthetic data, and discrete image data sets.
\end{abstract}

\section{Introduction}
Building large-scale probabilistic models for discrete data is a critical challenge in machine learning for its broad applicability to perform inference and generation tasks on images, text, or graphs. Energy-based models (EBMs) are a class of particularly flexible models $p_\ebm \propto \exp(-U)$, where the modelling of the energy function $U$ through a neural network function can be taylored to the data set of interest. However, EBMs are notoriously difficult to train due to the intractability of their normalisation.

\begin{wrapfigure}{r}{0.20\linewidth}
\vspace{-4mm}
\centering
\includegraphics[width=.2\textwidth]{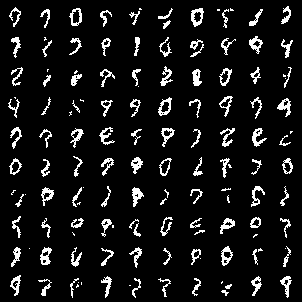}
\vspace{-6mm}
\caption{Generated samples from the EBM trained with Energy Discrepancy on static MNIST.}
\vspace{-4mm}
\label{fig:static_mnist_ed_bern}
\end{wrapfigure}
The most popular paradigm for the training of EBMs is the contrastive divergence (CD) algorithm \citep{hinton2002training} which performs approximate maximum likelihood estimation by using short-run Markov Chain Monte Carlo (MCMC) to approximate intractable expectations with respect to $p_\ebm$. The success of CD has lead to rich research results on sampling from discrete distributions to enable fast and accurate estimation of the EBM \citep{zanella2020informed,grathwohl2021oops,zhang2022langevin,sun2022path,sun2022optimalscaling,sun2023discrete,emami2023plug}.
However, training EBMs with CD remains challenging: Firstly, discrete probabilistic models often exhibit a large number of spurious modes which are difficult to explore even for the most advanced sampling algorithms. Secondly, CD lacks theoretical guarantees due to short run MCMC \citep{carreira2005contrastive} and often times leads to malformed energy landscapes \citep{nijkamp2019learning}.

We propose the usage of a new type of loss function called Energy Discrepancy (ED) \citep{schroeder2023energy} for the training of energy-based models on discrete spaces. The definition of ED only requires the evaluation of the EBM on positive and contrasting, negative samples. Unlike CD, energy discrepancy does not require sampling from the model during training, thus allowing for fast training with theoretical guarantees.
We demonstrate the effectiveness of ED by training Ising models, estimating discrete densities, and modelling discrete images in high-dimensions (see \cref{fig:static_mnist_ed_bern} for an illustration).
\section{Energy Discrepancies}
Energy discrepancies are based on the idea that if information is processed through a channel $\mathcal Q$ then information will be lost. Mathematically, this is expressed through the data processing inequality $\infdiv{Qp_\data}{Qp_\ebm}\leq \infdiv{p_\data}{p_\ebm}$. Consequently, the difference of the two KL divergences forms a valid loss for density estimation \citep{Lyu2011KLContractions}. Retaining only terms that depend on the energy function $U$ results in the energy discrepancy \citep{schroeder2023energy}:

{\definition[Energy Discrepancy]\label{definition-energy-discrepancy}{
Let $p_{\mathrm{data}}$ be a positive density on a measure space ($\mathcal{X}$, $\mathrm d\x$)\footnotemark and let $q(\y|\x)$ be a conditional probability density. Define the \emph{contrastive potential} induced by $q$ as
{\begin{align} \label{definition-contrastive-potential}
    U_q (\y) := - \log \sum_{\x'\in\mathcal X} q(\y | \x') \exp(-U(\x'))
\end{align}
\footnotetext{On discrete spaces $\mathrm d\x$ is assumed to be a counting measure. On continuous spaces $\mathcal X$, the appearing sums and expectations turn into integrals with respect to the Lebesgue measure}}
We define the \emph{energy discrepancy} between $p_{\mathrm{data}}$ and $U$ induced by $q$ as
\begin{equation*} \label{energy-discrepancy-loss}
    \ED_q (p_{\mathrm{data}}, U) := \mathbb{E}_{p_{\mathrm{data}}(\x)} [U(\x)] - \mathbb{E}_{p_{\mathrm{data}}(\x)}\mathbb{E}_{q(\y | \x)} [U_q (\y)]. 
\end{equation*}
}}

The validity of this loss functional is given by the following non-parametric estimation result, previously stated in \citet{schroeder2023energy}:
\begin{restatable}[]{theorem}{restatheoremone}
\label{theorem-energy-discrepancy}
Let $p_{\mathrm{data}}$ be a positive probability density on ($\mathcal{X}$, $\mathrm{d}\x$). Assume that for all $\x\sim p_\data$ and $\y\sim q(\y\vert\x)$, $\mathrm{Var}(\x \vert \y)>0$. Then, the energy discrepancy $\ED_q$ is functionally convex in $U$ and has, up to additive constants, a unique global minimiser $U^* = \argmin \ED_q (p_{\mathrm{data}}, U)$. Furthermore, this minimiser is the Gibbs potential for the data distribution, \it{i.e.} $p_\data \propto \exp(-U^\ast)$.
\end{restatable}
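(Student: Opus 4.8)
The plan is to establish the three assertions separately: functional convexity, that $U^\ast=-\log p_\data$ (modulo an additive constant) is a global minimiser, and that it is the only minimiser up to such constants. Convexity is the most direct ingredient. For each fixed $\y$ the contrastive potential $U_q(\y)=-\log\sum_{\x'}q(\y\mid\x')\exp(-U(\x'))$ is the negative of a log-sum-exp evaluated at the affine-in-$U$ quantities $-U(\x')$, hence concave in $U$; since $\E_{p_\data}[U(\x)]$ is linear in $U$, the loss $\ED_q=\E_{p_\data}[U]-\E_{p_\data}\E_q[U_q]$ is a sum of a linear term and an expectation of convex functionals and is therefore convex in $U$.

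Next I would recast $\ED_q$ in terms of KL divergences to expose its minimiser. Writing $p_\ebm(\x)\propto\exp(-U(\x))$ with normaliser $Z$, the definition of $U_q$ gives $U_q(\y)=-\log Z-\log (q p_\ebm)(\y)$, where $(q p_\ebm)(\y):=\sum_{\x}q(\y\mid\x)p_\ebm(\x)$ is the model distribution pushed through the channel $q$. Substituting this together with $U(\x)=-\log p_\ebm(\x)-\log Z$ into $\ED_q$ cancels the $\log Z$ terms, and after adding and subtracting the $U$-independent entropies of $p_\data$ and of $(q p_\data)$ one obtains
\begin{equation*}
\ED_q(p_\data,U)=\infdiv{p_\data}{p_\ebm}-\infdiv{q p_\data}{q p_\ebm}+C,
\end{equation*}
with $C$ independent of $U$. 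This is exactly the information-loss functional motivating energy discrepancy.

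I would then invoke the chain rule for KL divergence. Because the forward channel $q(\y\mid\x)$ is shared by $p_\data$ and $p_\ebm$, the joint divergence reduces to the marginal divergence on $\x$, so $\infdiv{p_\data}{p_\ebm}-\infdiv{q p_\data}{q p_\ebm}=\E_{(q p_\data)(\y)}\big[\infdiv{p_\data(\x\mid\y)}{p_\ebm(\x\mid\y)}\big]\ge 0$, where $p(\x\mid\y)\propto q(\y\mid\x)p(\x)$ are the posteriors. The right-hand side is nonnegative and vanishes when $p_\ebm=p_\data$, identifying $U^\ast=-\log p_\data$ (up to an additive constant) as a global minimiser; since the loss is convex, every stationary point is global, which I would confirm by checking that $U^\ast$ annihilates the first variation $p_\data(\x)-\E_{(q p_\data)(\y)}[p_\ebm(\x\mid\y)]$.

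The main obstacle is uniqueness. The minimum is attained precisely when the posterior divergence vanishes for $(q p_\data)$-almost every $\y$, i.e. $p_\data(\x\mid\y)=p_\ebm(\x\mid\y)$, which after cancelling $q(\y\mid\x)$ forces the likelihood ratio $p_\data(\x)/p_\ebm(\x)$ to be constant along every $\x$ in the support of a common posterior. Here the assumption $\mathrm{Var}(\x\mid\y)>0$ is essential: it guarantees each posterior $q(\x\mid\y)$ charges at least two distinct states, tying the ratio together across the states linked by the perturbation, and propagating this through overlapping supports forces $p_\data/p_\ebm$ to be globally constant, whence $p_\ebm=p_\data$ and $U=U^\ast+\mathrm{const}$. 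Equivalently, I would compute the Hessian of $\ED_q$, which equals the channel-weighted conditional variance $v\mapsto\E_{(q p_\data)(\y)}[\mathrm{Var}_{p_\ebm(\x\mid\y)}(v)]$; the variance condition makes this quadratic form vanish only on constants, yielding strict convexity on the quotient by constants and hence a unique minimiser. The delicate point throughout is verifying that non-degeneracy of the posteriors genuinely links all states, rather than splitting $\mathcal X$ into disconnected blocks on which the ratio could differ.
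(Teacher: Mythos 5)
Your proof takes a genuinely different route from the paper's, and for the first two claims it is correct. The paper argues variationally (\cref{appendix-proof-theorem-energy-discrepancy}): it computes the first functional derivative of $\ED_q$ and checks that it vanishes at $U^\ast=-\log p_\data$, then computes the second variation $\E_{p_\data(\x)}\E_{q(\y\mid\x)}\mathrm{Var}_{p_U(\z\mid\y)}[h(\z)]\ge 0$ to obtain convexity, and finally argues that this variance is strictly positive at $U^\ast$. You instead get convexity structurally (the contrastive potential is a negated log-sum-exp of affine images of $U$, hence concave), and you locate the minimiser by rewriting $\ED_q(p_\data,U)=\infdiv{p_\data}{p_\ebm}-\infdiv{qp_\data}{qp_\ebm}+C$ and applying the chain rule for the KL divergence of the joints $p(\x)q(\y\mid\x)$, which turns the loss into the expected posterior divergence $\E_{(qp_\data)(\y)}\bigl[\infdiv{p_\data(\x\mid\y)}{p_\ebm(\x\mid\y)}\bigr]+C\ge C$. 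The contraction identity is stated in the paper only as a connection (\cref{appendix-subsection-KLcontraction}) and is never used in its proof; your use of it buys non-negativity and the location of the minimum with no calculus at all, while the paper's route buys the explicit Hessian formula (which you also recover). Up to here both arguments are sound.

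The gap is in the uniqueness step, exactly where you flag it, and it cannot be closed from the stated hypothesis alone. Posterior equality for $(qp_\data)$-a.e.\ $\y$ gives $p_\data(\x)/p_\ebm(\x)=(qp_\data)(\y)/(qp_\ebm)(\y)$ whenever $q(\y\mid\x)>0$, so the ratio $r=p_\data/p_\ebm$ is constant on each connected component of the bipartite graph with edge set $\{(\x,\y):q(\y\mid\x)>0\}$; but $\mathrm{Var}(\x\mid\y)>0$ does not make this graph connected. Concretely, let $\mathcal X=\{1,2,3,4\}$ with $q(a\mid 1)=q(a\mid 2)=1$ and $q(b\mid 3)=q(b\mid 4)=1$. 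Every posterior charges two states, so the variance hypothesis holds, yet rescaling $p_\data$ by $1/c_A$ on $\{1,2\}$ and by $1/c_B$ on $\{3,4\}$ (with $c_A\neq c_B$ chosen so the result normalises) changes $\infdiv{p_\data}{p_\ebm}$ and $\infdiv{qp_\data}{qp_\ebm}$ by exactly the same amount, so all such energies are global minimisers although they do not differ from $-\log p_\data$ by an additive constant. Uniqueness therefore needs an irreducibility assumption on $q$ --- satisfied by the paper's Bernoulli perturbation with $\varepsilon\in(0,1)$, whose transition density is strictly positive, and by the pooling and grid constructions --- and your ``propagation through overlapping supports'' is precisely where it must be invoked; you should state it and use it there. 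You should not take the paper's own proof as evidence that the gap closes without it: the step in \cref{corollary-energy-discrepancy} asserting that $\mathrm{Var}_{p_\data(\z\mid\y)}[h(\z)]=0$ for a non-constant $h$ forces $\z$ to be a deterministic function of $\y$ is a non sequitur (it would require $h$ injective, and one is only entitled to the particular direction $h$ joining two putative minimisers); in the example above $h=\log r$ is non-constant, block-wise constant, and has zero variance under every posterior. Your proposal makes the obstruction explicit, which the paper does not; closing it requires strengthening the hypothesis, not a cleverer argument.
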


We give the proof of \cref{theorem-energy-discrepancy} in \cref{appendix-proof-theorem-energy-discrepancy}. The perturbation $q$ can be chosen quite generally as long as it can be guaranteed that computing $\y$ comes at a loss of information which mathematically is expressed through the variance of recovering $\x$ from $\y\sim q(\y\vert \x)$ being positive. In the next section, we propose some practical choices for $q$.
\subsection{Training Discrete Energy-Based Models with Energy Discrepancy}
The perturbation process $q$ needs to be chosen under the following considerations: 1) The contrastive potential $U_q(\y)$ has a numerically tractable approximation. 2) The negative samples obtained through $q$ are informative for training the EBM when only finite amounts of data are available. We propose three categories for constructing perturbative processes:

\textbf{Bernoulli Perturbation.}
For $\varepsilon \in (0, 1)$, let $\xib \sim \mathrm{Bernoulli}(\varepsilon)^d$. On $\mathcal X= \{0, 1\}^d$, consider the perturbation $\y = \x + \xib \,\mathrm{mod}(2)$ which induces a symmetric transition density $q(\y-\x)$ on $\{0, 1\}^d$. Due to the symmetry of $q$, we can then write the contrastive potential as
\begin{equation*}
    U_{\mathrm{bernoulli}}(\y) = -\log\sum_{\x'\in \mathcal X} q(\y-\x') \exp(-U(\x')) = -\log \mathbb E_{\x'\sim q(\y-\x')}[\exp(-U(\x'))]
\end{equation*}
The expectation on the right hand side can now be approximated via sampling $M$ Bernoulli random variables $\xib^j$ and taking the remainder of $(\y+\xib^j)/2$. We denote this method as ED-Bern.

\textbf{Deterministic Transformation.}
The perturbation $q$ can also be defined through a deterministic information loosing map $g:\mathcal X \to \mathcal Y$, where the space $\mathcal Y$ may or may not be equal to $\mathcal X$ depending on the choice of $g$. The contrastive potential can be expressed in terms of the preimage of $g$, i.e.
\begin{align*}
    U_g(\y) = -\log \sum_{\{\x': g(\x') = \y\}} \exp(-U(\x')) = -\log \mathbb E_{\x' \sim \mathcal U(\{g^{-1}(\y)\})}[\exp(-U(\x'))] - c
\end{align*}
with  $c = \log \vert \{g^{-1}(\y)\} \vert$. Again, the contrastive potential can be approximated through sampling $M$ instances from the uniform distribution over the set $\{\x': g(\x') = \y\}$. In our numerical experiments, we focus on the mean-pooling transform $g_{\mathrm{pool}}$ whose inverse are block-wise permutations. For details, see \cref{subsct:mean-pooling-appendix}. We denote this method as ED-Pool.

\textbf{Neighbourhood-based Transformation.}
Finally, inspired from concrete score matching \citep{Meng2022concreteSM}, we may define energy discrepancies based on neighbourhood maps $\x\mapsto \mathcal N(\x)\in \mathcal X^K$ which assign each point $\x\in \mathcal X$ a set of $K$ neighbours\footnote{We are making the assumption that the numbers of neighbours is the same for each point. A more general case is discussed in \cref{appendix-sec-directed-neighbour-structures}.}. We define the forward perturbation $q(\y\vert\x)$ by selecting neighbours $\y\sim \mathcal U(\mathcal N(\x))$ uniformly at random. Conversely, the contrastive potential can be expressed in terms of the inverse neighbourhood $\y\mapsto \mathcal N^{-1}(\y)\in\mathcal X^{K}$, i.e. the set of points that have $\y$ to their neighbour. We then obtain for the contrastive potential
\begin{align*}
    U_{\mathcal N}(\y) = -\log \frac{1}{K}\sum_{\x'\in\mathcal X: \y\in\mathcal N(\x')} \exp(-U(\x')) = -\log \mathbb E_{\x' \sim \mathcal U(\{\mathcal N^{-1}(\y)\})}[\exp(-U(\x'))]\,.
\end{align*}
In practice, we choose the grid neighbourhood (\cref{subsct:grid-neighbourhood-appendix}) and denote this method by ED-Grid.

\textbf{Stabilising Training.}
Above schemes permit the approximation of the contrastive potential from $M$ samples which are generated by first sampling $\y\sim q(\y\vert \x)$, after which we compute $M$ approximate recoveries $\x_-^j$. The full loss can then be constructed for each data point $\x_+\sim p_\data$ by calculating $\log \sum_{j= 1}^M \exp(U(\x_+) - U(\x_-^j))-\log(M)$ using the numerically stabilised logsumexp function. In practice, however, we find that this estimator for energy discrepancy is biased due to the logarithm and can exhibit high variance. To stabilise training, we introduce an offset for the logarithm which introduces a deterministic lower bound for the loss. This yields the energy discrepancy loss function
\begin{equation}\label{equ:stabilised-loss-function}
    \mathcal L_{q, M, w}(U) := \frac{1}{N} \sum_{i=1}^N \log\left(w+ \sum_{j= 1}^M \exp(U(\x^i_+) - U(\x_-^{i,j}))\right) - \log(M)
\end{equation}
with $\x^i_+ \sim p_\data$. In \cref{subsct:consistency-proof-appendix} we proof that this approximation is consistent for any fixed $w$:

\begin{restatable}[]{theorem}{restatetheoremtwo}
For every $\varepsilon \!>\! 0$ there exist $N, M \!\in\! \mathbb N$ such that $\lvert\mathcal L_{q, M, w}(U) \!-\! \ED_q(p_\data, U)\rvert \!<\! \varepsilon$ a.s..
\end{restatable}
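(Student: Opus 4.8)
The plan is to establish the claim as a double application of the strong law of large numbers (SLLN), once over the $N$ data samples and once over the $M$ Monte Carlo recoveries, combined through a triangle inequality. First I would rewrite the loss in a form that exposes its structure: pulling $M$ inside the logarithm gives
\begin{equation*}
\mathcal L_{q,M,w}(U) = \frac{1}{N}\sum_{i=1}^N \ell_M^i, \qquad \ell_M^i := \log\Big(\tfrac{w}{M} + \tfrac{1}{M}\sum_{j=1}^M \exp\big(U(\x^i_+) - U(\x_-^{i,j})\big)\Big),
\end{equation*}
so the offset now appears as a term $w/M$ that will vanish as $M\to\infty$. The key observation is that the triples $(\x^i_+,\y^i,\{\x_-^{i,j}\}_j)$ are i.i.d.\ across $i$, so the summands $\ell_M^i$ are i.i.d.\ for each fixed $M$, while conditionally on $(\x^i_+,\y^i)$ the recoveries $\x_-^{i,j}$ are i.i.d.\ draws from the distribution used to approximate the contrastive potential (e.g.\ $q(\y^i-\cdot)$ for ED-Bern, the uniform law on $g^{-1}(\y^i)$ for ED-Pool, or on $\mathcal N^{-1}(\y^i)$ for ED-Grid).

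Next I would treat the inner limit. For fixed $i$, conditionally on $(\x^i_+,\y^i)$ the SLLN gives $\tfrac1M\sum_j \exp(U(\x^i_+)-U(\x_-^{i,j})) \to \E_{\x'}[\exp(U(\x^i_+)-U(\x'))]$ almost surely, the expectation being over the recovery law conditioned on $\y^i$. Since $w/M\to0$ and $\log$ is continuous, $\ell_M^i \to \log \E_{\x'}[\exp(U(\x^i_+)-U(\x'))]=:\ell_\infty^i$ a.s. Using $U(\x) + \log\E_{\x'}[\exp(-U(\x'))] = \log\E_{\x'}[\exp(U(\x)-U(\x'))]$ together with the definition of $U_q$, one identifies $\E[\ell_\infty^1] = \E_{p_\data}[U(\x)] - \E_{p_\data}\E_{q(\y\mid\x)}[U_q(\y)] = \ED_q(p_\data,U)$, which is exactly the target.

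For the outer limit, fix $M$: the SLLN over the i.i.d.\ summands yields $\tfrac1N\sum_i \ell_M^i \to L_M := \E[\ell_M^1]$ a.s.\ as $N\to\infty$. It remains to show that the finite-$M$ (Jensen) bias vanishes, i.e.\ $L_M \to \ED_q$ as $M\to\infty$, which follows from the pointwise convergence $\ell_M^1\to\ell_\infty^1$ by dominated convergence. Since $\mathcal X$ is finite and $U$ bounded, say $|U|\le B$, every $\ell_M^i$ lies in $[-2B,\log(e^{2B}+w)]$ uniformly in $M$, so the dominating constant is immediate. Combining the two limits: given $\varepsilon>0$, first choose $M$ so that $|L_M-\ED_q|<\varepsilon/2$, and then, for that $M$, invoke the $N$-SLLN to obtain (a.s., for $N$ large) $|\tfrac1N\sum_i\ell_M^i-L_M|<\varepsilon/2$; the triangle inequality delivers $|\mathcal L_{q,M,w}(U)-\ED_q(p_\data,U)|<\varepsilon$ a.s.

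The step I expect to be the main obstacle is the $M\to\infty$ limit of $L_M=\E[\ell_M^1]$: one must cleanly separate the two independent sources of randomness and justify the interchange of limit and expectation that makes the logarithmic bias disappear, while also verifying that the offset washes out (it does, contributing only $w/M$). On the finite discrete state spaces considered here the required domination is trivial, so the argument reduces to two invocations of the SLLN plus continuity of the logarithm. For deterministic transforms one additionally carries along the $U$-independent constant $c=\log\lvert g^{-1}(\y)\rvert$, which is dropped in the definition of $\ED_q$ and hence does not affect consistency up to that constant; when $\sum_{\x'}q(\y\mid\x')=1$ (as for ED-Bern and size-matched neighbourhoods) no such constant appears and the identification $\E[\ell_\infty^1]=\ED_q$ is exact.
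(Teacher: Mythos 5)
Your proof is correct, and it uses the same core ingredients as the paper's (the strong law of large numbers across the $N$ data points, a conditional law of large numbers across the $M$ recoveries given $(\x_+^i,\y^i)$, the vanishing offset $w/M\to 0$, continuity of the logarithm, and a triangle inequality), but it routes them through a genuinely different decomposition. The paper pivots through the \emph{random} quantity $\frac{1}{N}\sum_{i=1}^N \log \E\bigl[\exp(U(\x_+^i)-U(\x_-^{i,j}))\,\big\vert\, \x_+^i, \y^i\bigr]$ (empirical in $N$, exact in $M$): the first gap is closed by the SLLN in $N$, and the second by applying the conditional SLLN separately for each $i$ and bounding by $\max(\varepsilon_{1,M},\dots,\varepsilon_{N,M})$, which forces the choice of $M$ to come after and depend on $N$, i.e.\ $M = M(N)$. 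You instead pivot through the \emph{deterministic} quantity $L_M = \E[\ell_M^1]$ (exact in $N$, finite in $M$): the SLLN in $N$ for fixed $M$ closes one gap, and the other becomes a purely deterministic bias statement $\lvert L_M - \ED_q\rvert \to 0$, which you prove by combining the conditional SLLN with dominated convergence. What your route buys is a cleaner quantifier structure --- $M$ is chosen first from the bias bound, independently of $N$, and the Jensen bias is isolated as an explicit deterministic object --- at the cost of needing a uniform dominating bound for the DCT step; you correctly note this is immediate on the finite spaces considered here ($\lvert U\rvert \le B$ gives $\ell_M^i \in [-2B, \log(e^{2B}+w)]$ uniformly in $M$), though it would be an extra hypothesis on a countably infinite space, where the paper's pivot only needs integrability. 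Your closing caveat about the $\y$-dependent constant $c = \log\lvert g^{-1}(\y)\rvert$ for deterministic transforms is also apt; the paper's proof glosses over this by working abstractly with $p_{\mathrm{neg},q,\y}$. Both your argument and the paper's share the same mild imprecision in the final step: almost-sure convergence yields an $\omega$-dependent threshold, so the conclusion ``there exist $N, M$ such that the bound holds a.s.'' is stated slightly more strongly than what is literally derived.
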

\section{Experiments}

\begin{table*}[t]
\small
\setlength{\tabcolsep}{1.5mm}
\centering
\caption{
Experiment results with seven 2D synthetic problems.
We display the negative log-likelihood (NLL) and MMD (in units of $1\times 10^{-4}$). The results of baselines are taken from \cite{zhang2022generative}.
}
\label{tab:synthetic_nll_mmd}
\begin{tabular}{c|l|ccccccc}
\toprule
Metric & Method & 2spirals & 8gaussians & circles & moons & pinwheel & swissroll & checkerboard \\
\midrule
\multirow{6}{*}{NLL$\downarrow$} 
& PCD    &  $20.094$ & $19.991$ &$20.565$&$19.763$&$19.593$&$20.172$&$21.214$ \\
& ALOE+  &  $20.062$ & $19.984$ & $20.570$ & $19.743$ & $19.576$ & $20.170$ & $21.142$\\
& EB-GFN & ${20.050}$ & ${19.982}$ & $\textbf{20.546}$ & ${19.732}$ & ${19.554}$ & ${20.146}$ & ${20.696}$ \\
& ED-Bern (ours) & $\textbf{20.039}$ & ${19.992}$ & ${20.601}$ & $\textbf{19.710}$ & ${19.568}$ & $\textbf{20.084}$ & ${20.679}$ \\
& ED-Pool (ours) & ${20.051}$ & ${19.999}$ & ${20.604}$ & ${19.721}$ & $\textbf{19.531}$ & $\textbf{20.084}$ & $\textbf{20.676}$ \\
& ED-Grid (ours) & ${20.049}$ & $\textbf{19.965}$ & ${20.601}$ & ${19.715}$ & ${19.564}$ & ${20.088}$ & ${20.678}$ \\
\midrule
\multirow{6}{*}{MMD$\downarrow$} 
& PCD    &  $2.160$&$0.954$&$0.188$&$0.962$&$0.505$&$1.382$& $2.831$ \\
& ALOE+  &  $ {0.149} $ & ${0.078}$ & $0.636$ & $0.516$ & $1.746$ & $0.718$ & $12.138$ \\
& EB-GFN &  $0.583$ & $0.531$ & ${0.305}$ & ${0.121}$ & ${0.492}$ & ${0.274}$ & $\textbf{1.206}$\\ 
& ED-Bern (ours) & ${0.120}$ & ${0.014}$ & ${0.137}$ & $\textbf{-0.088}$ & $\textbf{0.046}$ & $\textbf{0.045}$ & ${1.541}$ \\
& ED-Pool (ours) & ${0.129}$ & ${\text{-}0.003}$ & $\textbf{-0.021}$ & ${0.042}$ & ${0.126}$ & ${0.101}$ & ${2.080}$ \\
& ED-Grid (ours) & $\textbf{0.097}$ & $\textbf{-0.066}$ & ${0.022}$ & ${0.018}$ & ${0.351}$ & ${0.097}$ & ${2.049}$ \\
\bottomrule
\end{tabular}
\vspace{-5mm}
\end{table*}

\begin{wrapfigure}{r}{0.60\linewidth}
\vspace{-4mm}
\centering
\includegraphics[width=.14\textwidth]{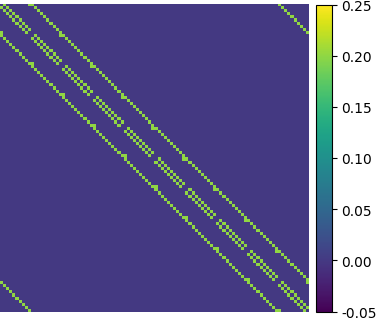}
\includegraphics[width=.14\textwidth]{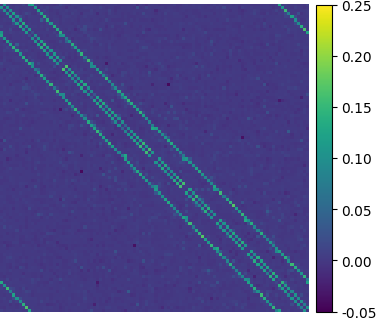}
\includegraphics[width=.14\textwidth]{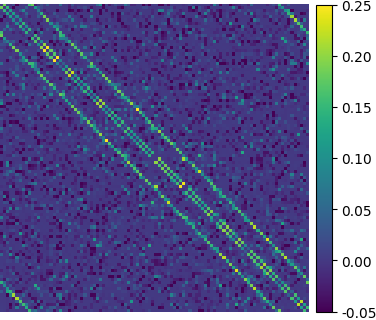}
\includegraphics[width=.14\textwidth]{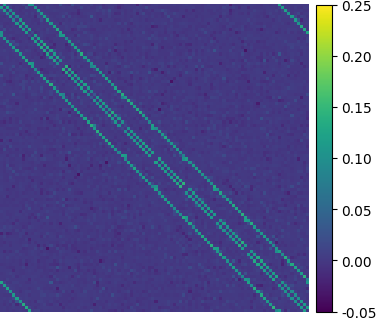}
\vspace{-2mm}
\caption{Experiment results on learning lattice Ising models. Left to right: ground truth, ED-Bern, ED-Pool, ED-Grid.}
\label{fig:ising_learned_matrix}
\end{wrapfigure}

\textbf{Training Ising Models.}
We evaluate the proposed methods on the lattice Ising model, which has the form of
\begin{align}
    p(\x) \propto \exp (\x^T J \x), \  \x \in \{-1,1\}^D, \nonumber
\end{align}
where $J=\sigma A_D$ with $\sigma \in \mathbb{R}$ and $A_D$ being the adjacency matrix of a $D\times D$ grid.
Following \cite{zhang2022generative}, we generate training data through Gibbs sampling and use the generated data to fit a symmetric matrix $J$ via energy discrepancy. 
In \cref{fig:ising_learned_matrix}, we consider $D=10\times 10$ grids with $\sigma =0.2$ and illustrate the learned matrix $J$ using a heatmap. It can be seen that the variants of energy discrepancy can identify the pattern of the ground truth, confirming the effectiveness of our methods. We defer experimental details and quantitative results comparing with baselines to \cref{appendix-sec-training-ising-models}.

\textbf{Discrete Density Estimation.}
In this experiment, we follow the experimental setting of \cite{dai2020learning,zhang2022generative}, which aims to model discrete densities over $32$-dimensional binary data that are discretisations of continuous densities on the plane (see \cref{fig:true_synthetic_samples}). Specifically, we convert each planar data point $\hat{\x} \in \mathbb{R}^2$ to a binary data point $\x \in \{0,1\}^{32}$ via Gray code \citep{gray1953pulse}. Consequently, the models face the challenge of modeling data in a discrete space, which is particularly difficult due to the non-linear transformation from $\hat{\x}$ to $\x$.

We compare our methods to three baselines: PCD \citep{tieleman2008training}, ALOE+ \citep{dai2020learning}, and EB-GFN \citep{zhang2022generative}. The experimental details are given in \cref{appendix-sec-discrete-density-estimation}. 
For qualitative evaluation, we visualise the energy landscapes learned by our methods in \cref{fig:toy_result_visualisation}. It shows that energy discrepancy is able to faithfully model multi-modal distributions and accurately learn the sharp edges present in the data support. For further qualitative comparisons, we refer to the energy landscapes of baseline methods presented in Figure C.2 of \cite{zhang2022generative}.
Moreover, we quantitatively evaluate different methods in \cref{tab:synthetic_nll_mmd} by showing the negative log-likelihood (NLL) and the exponential Hamming MMD \citep{gretton2012kernel}. Perhaps surprisingly, we find that energy discrepancy outperforms the baselines on most settings, despite not requiring MCMC simulation like PCD or training an additional variational network like ALOE and EB-GFN. A possible explanation for this are biases introduced by short-run MCMC sampling in the case of PCD or non-converged variational proposals in ALOE. By definition, ED transforms the data distribution as well as the energy function which corrects for such biases.

\textbf{Discrete Image Modelling.}
Here, we evaluate our methods in discrete high-dimensional spaces. Following the settings in \cite{grathwohl2021oops,zhang2022langevin}, we conduct experiments on four different binary image datasets. Training details are given in \cref{appendix-sec-discrete-image-modelling}. After training, we adopt Annealed Importance Sampling \citep{neal2001annealed} to estimate the log-likelihoood.

The baselines include persistent contrastive divergence with vanilla Gibbs sampling, Gibbs-With-Gradient \cite[GWG]{grathwohl2021oops}, Generative-Flow-Network \cite[GFN]{zhang2022generative}, and Discrete-Unadjusted-Langevin-Algorithm \cite[DULA]{zhang2022langevin}. The NLLs on the test set are reported in \cref{tab:image_logll}. We see that energy discrepancy yields comparable performances to the baselines, while ED-Pool is unable to capture the data distribution. We emphasise that energy discrepancy only requires $M$ (here, $M=32$) evaluations of the energy function per data point in parallel. This is notably fewer than contrastive divergence, which requires simulating multiple MCMC steps without parallelisation. 
We also visualise the generated samples in \cref{fig:sample-ebm-appendix}, which showcase the diversity and high quality of the images generated by ED-Bern and ED-Grid.
However, we observed that ED-Pool suffers from mode collapse.

\begin{table}[t]
\vspace{-3mm}
\small
\setlength{\tabcolsep}{1.1mm}
\centering
\caption{Experimental results on discrete image modelling. We report the negative log-likelihood (NLL) on the test set for different models. The results of Gibbs, GWG, and DULA are taken from \cite{zhang2022langevin}, and the result of EB-GFN is from \cite{zhang2022generative}.}
\label{tab:image_logll}
\vspace{-2mm}
\begin{tabular}{l|ccccccc}
\toprule 
Dataset $\backslash$ Method & Gibbs & GWG & EB-GFN & DULA & ED-Bern (ours) & ED-Pool (ours) & ED-Grid (ours) \\
\midrule 
Static MNIST & $117.17$ & $\textbf{80.01}$ & $102.43$ & $80.71$ & 95.38 & 168.07 & 90.15 \\
Dynamic MNIST & $121.19$ & $\textbf{80.51}$ & ${105.75}$ & $81.29$ & 97.03 & 144.26 & 81.26 \\
Omniglot & $142.06$ & ${94.72}$ & ${112.59}$ & $145.68$ & 97.87 & 118.66 & \textbf{94.64} \\
Caltech Silhouettes & $163.50$ & $\textbf{96.20}$ &  ${185.57}$ & $100.52$ & 96.36 & 501.96 & 117.70 \\
\bottomrule
\end{tabular}
\vspace{-5mm}
\end{table}
\section{Conclusion and Outlook}
In this paper we demonstrate how energy discrepancy can be used for efficient and competitive training of energy-based models on discrete data without MCMC. The loss can be defined based on a large class of perturbative processes of which we introduce three types: noise, determinstic transform, and neighbourhood-based transform. Our results show that the choice of perturbation matters and motivates further research on effective choices depending on the data structure of interest.

We observe empirically that similarly to other contrastive losses, energy discrepancy shows limitations when the ambient dimension of $\mathcal X$ is significantly larger than the intrinsic dimension of the data. In these cases, training is aided significantly by a base distribution that models the lower-dimensional space populated by data. For this reason, the adoption of ED on new data sets or different data structures may require adjustments to the methodology such as learning appropriate base distributions and finding more informative perturbative transforms.

For future work, we are interested in how this work extends to highly structured data such as graphs or text. These settings may require a deeper understanding of how the perturbation influences the performance of ED and what is gained from gradient information in CD \citep{zhang2022langevin,grathwohl2021oops} or ratio matching \citep{liu2023RMwGGIS}.

\section*{Acknowledgements}
TS would like to thank G.A. Pavliotis for insightful discussions leading up to the presented work. TS was supported by the EPSRC-DTP scholarship partially funded by the Ddepartment of Mathematics, Imperial College London. ZO was supported by the Lee Family Scholarship. ABD was supported by Wave 1 of The UKRI Strategic Priorities Fund under the EPSRC Grant EP/T001569/1 and EPSRC Grant EP/W006022/1, particularly the “Ecosystems of Digital Twins” theme within those grants and The Alan Turing Institute. We thank the anonymous reviewer for their comments.

{
\bibliography{main}
\bibliographystyle{icml2022}
}

\newpage 
\appendix

\begin{center}
\LARGE
\textbf{Appendix for ``Training Discrete EBMs with Energy Discrepancy''}
\end{center}

\etocdepthtag.toc{mtappendix}
\etocsettagdepth{mtchapter}{none}
\etocsettagdepth{mtappendix}{subsection}
{\small \tableofcontents}

\section{Abstract Proofs and Derivations}
\subsection{Proof of the Non-Parametric Estimation Theorem \ref{theorem-energy-discrepancy}} \label{appendix-proof-theorem-energy-discrepancy}
In this subsection we give a formal proof for the uniqueness of minima of $\ED_{q}(p_\data, U)$ as a functional in the energy function $U$. We first reiterate the theorem as stated in the paper:
\restatheoremone*
We test energy discrepancy on the first and second order optimality conditions, i.e. we test that the first functional derivative of ED vanishes in $U^\ast$ and that the second functional derivative is positive definite. For uniqueness and well-definedness, we constrain the optimisation domain to the following set:
\begin{equation*}
    \mathcal G := \left\{U:\mathcal X\mapsto \mathbb R \,\text{ such that }\, \exp(-U)\in L^1(\mathcal X, \mathrm d\x)\,,\,\,\, U\in L^1(p_\data)\,, \,\text{ and }\, \min_{\x\in\mathcal X} U(x) = 0\right\}
\end{equation*}
and require that there exists a $U^\ast\in \mathcal G$ such that $\exp(-U^\ast) \propto p_\data$. We now start with the following lemmata and then complete the proof of \cref{theorem-energy-discrepancy} in \cref{corollary-energy-discrepancy}.

{\lemma{
Let $h\in \mathcal G$ be arbitrary. The first variation of $\ED_q$ is given by
\begin{align}\label{first-variation-equation}
    \left. \frac{\mathrm{d}}{\mathrm{d} \epsilon} \ED_q (p_\data, U + \epsilon h) \right|_{\epsilon = 0} = \mathbb{E}_{p_\data(\x)} [h(\x)] - \mathbb{E}_{p_\data(\x)} \mathbb{E}_{q(\y|\x)} \mathbb{E}_{p_U(\z | \y)}[h(\z)]
\end{align}
where $p_U(\z | \y) = \frac{q(\y|\z)\exp(-U(\z))}{\sum_{\z'\in\mathcal X} q(\y\vert \z') \exp(-U(\z'))}$.
}}
\begin{proof}
We define the short-hand notation $U_\epsilon := U + \epsilon h$. The energy discrepancy at $U_\varepsilon$ reads
\begin{align*}
    \ED_{q} (p_\data, U_\epsilon) =  \mathbb{E}_{p_\data(\x)} [U_\epsilon(\x)] + \mathbb{E}_{p_\data(\x)}\mathbb{E}_{q(\y | \x)} \left[ \log \sum_{\z\in\mathcal X} q(\y | \z) \exp(-U_\epsilon(\z)) \right] \nonumber\,.
\end{align*}
For the first functional derivative, we only need to calculate
\begin{align}\label{equation-appendix-derivative-free-energy}
    \frac{\mathrm{d}}{\mathrm{d} \epsilon} \log \sum_{\z\in\mathcal X} q(\y | \z) \exp(-U_\epsilon(\z)) = \sum_{\z\in\mathcal X} \frac{- q(\y | \z) h(\z) \exp(-U_\epsilon(\z))}{\sum_{\z'\in\mathcal X} q(\y | {\z^\prime}) \exp(-U_\epsilon(\z^\prime)) } = -\mathbb{E}_{p_{U_\epsilon}(\z | \y)}[h(\z)].
\end{align}
Plugging this expression into $\ED_{q} (p_\data, U_\epsilon)$ and setting $\epsilon = 0$ yields the first variation of $\ED_q$.
\end{proof}
{\lemma{ \label{second-variation-energy-discrepancy}
The second variation of $\ED_q$ is given by
\begin{align*}
    \left. \frac{\mathrm{d}^2}{\mathrm{d} \epsilon^2} \ED_{q} (p_\data, U + \epsilon h) \right|_{\epsilon = 0} = \mathbb{E}_{p_\data(\x)} \mathbb{E}_{q(\y|\x)} \mathrm{Var}_{p_{U}(\z| \y)}[h(\z)].
\end{align*}
}}
\begin{proof}
For the second order term, we have based on equation \ref{equation-appendix-derivative-free-energy} and the quotient rule for derivatives:
\begin{align*}
    \frac{\mathrm{d}^2}{\mathrm{d} \epsilon^2} \log &\sum_{\z\in\mathcal X} q(\y | \z) \exp(-U_\epsilon(\z))\\\notag
    &= \frac{\sum_{\z\in\mathcal X} q(\y | \z) \exp(U_\epsilon(\z))\,h^2(\z)\, \, \sum_{\z^\prime\in\mathcal X} q(\y | \z^\prime) \exp(-U_\epsilon(\z^\prime)) }{\left( \sum_{\z^\prime\in\mathcal X } q(\y | {\z^\prime}) \exp(-U_\epsilon({\z^\prime})) \right)^2} \\\notag
    &\quad- \frac{\sum_{\z\in\mathcal X} q(\y | \z) \exp(U_\epsilon(\z))h(\z) \sum_{\z^\prime\in\mathcal X} q(\y | {\z^\prime}) \exp(-U_\epsilon({\z^\prime})) h(\z^\prime) }{\left( \sum_{\z^\prime\in\mathcal X} q(\y | {\z^\prime}) \exp(-U_\epsilon({\z^\prime}))  \right)^2} \\\notag
    &= \mathbb{E}_{p_{U_\epsilon}(\z| \y)}[h^2(\z)] - \mathbb{E}_{p_{U_\epsilon}(\z| \y)}[h(\z)]^2 = \mathrm{Var}_{p_{U_{\epsilon}}(\z| \y)}[h(\z)]\,.
\end{align*}
We obtain the desired result by interchanging the outer expectations with the derivatives in $\epsilon$.
\end{proof}
{\corollary{\label{corollary-energy-discrepancy}
Let $c=\min_{\x\in\mathcal X} (-\log p_\data(\x))$. For $U^\ast = -\log(p_\data) - c\in\mathcal G$ it holds that
\begin{align*}
    \left. \frac{\mathrm{d}}{\mathrm{d} \epsilon} \ED_{q} (p_\data, U^\ast + \epsilon h) \right|_{\epsilon = 0} &= 0  \\\nonumber
    \left. \frac{\mathrm{d}^2}{\mathrm{d} \epsilon^2} \ED_{q} (p_\data, U^\ast + \epsilon h) \right|_{\epsilon = 0}&>0 \hspace{1cm}\text{for all} \hspace{1cm}h\,,
\end{align*}
Furthermore, $U^\ast$ is the unique global minimiser of $\ED_q(p_\data, \cdot)$ in $\mathcal G$.
}}

\begin{proof}
By definition, the variance is non-negative, i.e. for every $h\in \mathcal G$:
\begin{equation*}
    \left. \frac{\mathrm{d}^2}{\mathrm{d} \epsilon^2} \ED_{q} (p_\data, U + \epsilon h) \right|_{\epsilon = 0} = \mathrm{Var}_{p_{U}(\z| \y)}[h(\z)]\geq 0\,.
\end{equation*}
Consequently,  the energy discrepancy is convex and an extremal point of $\ED_q(p_\data, \cdot)$ is a global minimiser. We are left to show that the minimiser is obtained at $U^\ast$ and unique. First of all, we have for $U^\ast$:
\begin{align*}
    \mathbb{E}_{p_{U^\ast}(\z | \y)}[h(\z)]
    &= \sum_{\z\in\mathcal X}\frac{q(\y|\z) \exp(-U^*(\z))}{\sum_{\z^\prime\in\mathcal X} q(\y|\z^\prime) \exp(-U^*(\z^\prime)) }  h(\z)\nonumber \\
    &= \sum_{\z\in\mathcal X}\frac{q(\y|\z) p_\data(\z)}{\sum_{\z^\prime \in\mathcal X}q(\y|\z^\prime) p_\data(\z^\prime)}  h(\z). \nonumber
\end{align*}
By applying the outer expectations we obtain
\begin{align*}
    \mathbb{E}_{p_\data(\x)} \mathbb{E}_{q(\y|\x)} \mathbb{E}_{p_{U^\ast}(\z| \y)}[h(\z)] 
    &= \sum_{\x\in\mathcal X} p_\data(\x) \sum_{\y\in\mathcal Y} \left(q(\y\vert \x)   \sum_{z\in\mathcal X}\left(\frac{q(\y|\z) p_\data(\z)}{\sum_{\z'\in\mathcal X} q(\y|\z^\prime) p_\data(\z^\prime)}  h(\z) \right)\right)\,\nonumber \\
    &= \sum_{\z\in\mathcal X} \sum_{\y\in\mathcal Y} q(\y\vert \x) p_\data(\z) h(\z) \nonumber \\
    &= \mathbb{E}_{p_\data(\z)} [h(\z)], \nonumber
\end{align*}
where we used that the marginal distributions $\sum_{\x\in\mathcal X} p_\data(\x) q(\y\vert \x) $ cancel out and the conditional probability density integrates to one. This implies
\begin{align*}
    \left. \frac{\mathrm{d}}{\mathrm{d} \epsilon} \ED_{q} (p_\data, U^* + \epsilon h) \right|_{\epsilon = 0} =\mathbb{E}_{p_\data(\z)} [h(\z)]-\mathbb{E}_{p_\data(\z)} [h(\z)] = 0. \nonumber
\end{align*}
for all $h\in\mathcal G$. We now show that
\begin{equation*}
    \left. \frac{\mathrm{d}^2}{\mathrm{d} \epsilon^2} \ED_{q} (p_\data, U^\ast + \epsilon h) \right|_{\epsilon = 0} = \mathbb{E}_{p_\data(\x)} \mathbb{E}_{q(\y|\x)} \mathrm{Var}_{p_\data(\z| \y)}[h(\z)] >0\,.
\end{equation*}
Assume that the second variation was zero. Since the perturbed data distribution $\sum_{\x\in\mathcal X} p_\data(\x)q(\y\vert \x)$ is positive, the second variation at $U^\ast$ is zero if and only if the conditional variance $\mathrm{Var}_{p_\data(\z\vert \y)}[h(\z)] = 0$. Since $U^\ast+\varepsilon h\in\mathcal G$, the function $h$ can not be constant. By definition of the conditional variance, $h(\z)$ must then be a deterministic function of $\y \sim \sum_{\x\in\mathcal X} q(\y\vert \x)p_\data(\x)$. Since $h$ was arbitrary, there exists a measurable map $g$ such that $\z = g(\y)$ and $\mathrm{Var}_{p_\data(\z\vert\y)}[\z] = 0 $ which is a contradiction to our assumptions. Consequently, $U^\ast$ is the unique global minimiser of $\ED_q$ which completes the statement in \cref{theorem-energy-discrepancy}.
\end{proof}
\section{Connections to other Methods}
In this section, we follow \citet{schroeder2023energy}.
\subsection{Connections of Energy Discrepancy with Contrastive Divergence}\label{appendix-subsection-connection-CD}
The contrastive divergence update can be derived from an energy discrepancy when, for $E_\theta$ fixed, $q$ satisfies the detailed balance relation
\begin{equation*}
    q(\y\vert\x)\exp(-E_\theta(\x)) = q(\x\vert \y)\exp(-E_\theta(\y))\,.
\end{equation*}
To see this, we calculate the contrastive potential induced by $q$: We have
\begin{align*}
    -\log \sum_{\x'\in\mathcal X} q(\y\vert \x')\exp(-E_\theta(\x')) &= -\log \sum_{\x'\in\mathcal X} q(\x'\vert \y)\exp(-E_\theta(\y))= E_\theta(\y)\,.
\end{align*}
Consequently, the energy discrepancy induced by $q$ is given by
\begin{equation*}
    \ED_q(p_\data, E_\theta) = \mathbb E_{p_\data(\x)}[E_\theta(\x)]-\mathbb E_{p_\data(\x)}\mathbb E_{q(\y\vert\x)}[E_\theta(\y)]\,.
\end{equation*}
Updating $\theta$ based on a sample approximation of this loss leads to the contrastive divergence update
\begin{equation*}
   \Delta\theta \propto \frac{1}{N}\sum_{i=1}^N \nabla_\theta E_\theta(\x^i)- \frac{1}{N}\sum_{i=1}^N \nabla_\theta E_\theta(\y^i) \quad \y^i \sim q(\cdot\vert \x^i)
\end{equation*}
It is important to notice that the distribution $q$ depends on $E_\theta$ and needs to adjusted in each step of the algorithm. For fixed $q$, $\ED_q(p_\data, E_\theta)$ satisfies \cref{theorem-energy-discrepancy}. This means that each step of contrastive divergence optimises a loss with minimiser $E_\theta^\ast = -\log p_\data +c $. However, the loss function changes in each step of contrastive divergence. The connection also highlights the importance Metropolis-Hastings adjustment to ensure that the implied $q$ distribution satisfies the detailed balance relation.
\subsection{Derivation of Energy Discrepancy from KL Contractions}\label{appendix-subsection-KLcontraction}
A Kullback-Leibler contraction is the divergence function $\infdiv{p_\data}{p_\ebm} - \infdiv{Qp_\data}{Qp_\ebm}$ \citep{Lyu2011KLContractions} for the convolution operator $Q p(\y) = \sum_{\x'\in\mathcal X} q(\y\vert \x')p(\x')$. The linearity of the convolution operator retains the normalisation of the measure, i.e. for the energy-based distribution $p_\ebm$ we have
\begin{equation*}
    Q p_\ebm = \frac{1}{Z_U}\sum_{\x'\in\mathcal X} q(\y\vert \x') \exp(-U(\x')) \quad \text{with} \quad Z_U = \sum_{\x'\in\mathcal X} \exp(-U(\x'))\,.
\end{equation*}
The KL divergences then become with $U_q := -\log Q\exp(-U(\x))$
\begin{align*}
    \infdiv{p_\data}{p_\ebm} &= \mathbb E_{p_\data(\x)}[\log p_\data(\x)] + \mathbb E_{p_\data(\x)}[U(\x)] + \log Z_U\\\notag
    \infdiv{Qp_\data}{Qp_\ebm}&= \mathbb E_{Qp_\data(\y)}[\log Qp_\data(\y)] + \mathbb E_{Qp_\data(\y)}\left[U_q(\y)\right] + \log Z_U
\end{align*}
Since the normalisation cancels when subtracting the two terms we find
\begin{equation*}
    \infdiv{p_\data}{p_\ebm} - \infdiv{Qp_\data}{Qp_\ebm} = \ED_q(p_\data, U) + c
\end{equation*}
where $c$ is a constant that contains the $U$-independent entropies of $p_\data$ and $Qp_\data$.
\section{Sample Approximations of Energy Discrepancies}
In this section, we discuss practical implementations of the mean-pooling transform as an information destroying deterministic process and the grid-neighbourhood as a neighbourhood-based transformation.
\subsection{General Strategy}
As a general strategy, the contrastive potential has to be written as an expectation over an appropriate to be determined distribution $p_{\mathrm{neg}, q, \y}$ that depends on the chosen perturbation process and on the point where the contrastive potential is evaluated, i.e.
\begin{equation}
    U_q(\y) = -\log \mathbb E_{p_{\mathrm{neg}, q, \y}(\x')} \exp(-U(\x'))
\end{equation}
which allows the evaluation of the contrastive potential via sampling from $p_{\mathrm{neg}, q, \y}$. The energy discrepancy can then be written as
\begin{equation}
    \ED_q(p_\data, U) = \mathbb E_{p_\data(\x)}\mathbb E_{q(\y\vert\x)}\left[\log \mathbb E_{p_{\mathrm{neg}, q, \y}(\x')} \left[\exp(U(\x)-U(\x'))\right]\right]
\end{equation}
by using properties of the logarithm and exponential and the fact that $U(\x)$ does not depend on the expectations taken in $\y$ and $\x'$. The loss can then be approximated via ancestral sampling. We first sample a batch $\x_+^i\sim p_\data$, subsequently sample its perturbed counter part $\y^i\sim q(\cdot\vert\x_+^i)$, and finally sample $M$ negative samples $\x_-^{i, j}\sim p_{\mathrm{neg}, q, \y^i}$. Sometimes, the perturbed sample $\y^i$ is never explicitely computed in the process. As described in \cref{equ:stabilised-loss-function}, the approximation is always stabilised through tunable hyper-parameter $w$ which finally yields the loss function
\begin{equation*}
    \mathcal L_{q, M, w}(U) := \frac{1}{N} \sum_{i=1}^N \log\left(w+ \sum_{j= 1}^M \exp(U(\x^i_+) - U(\x_-^{i,j}))\right) - \log(M)
\end{equation*}
The justification for the stabilisation is two-fold. Firstly, the logarithm makes the Monte-Carlo approximation of the contrastive potential biased due to Jensens inequality. The bias is negative, given to leading order by the variance of the approximation, and depends on the energy function $U$. Thus, the optimiser may start to optimise for a high bias and high variance estimator of the contrastive potential rather than learning the data distribution. While this issue can be alleviated by significantly large choices for $M$, it is much more practical to introduce a deterministic lower bound to the loss-functional through the stabilisation $w$, which prevents the bias and logarithm from diverging. Secondly, the effect of the stabilisation goes to zero as $M$ increases. Thus, the asymptotic limit for $M$ and $N$ large is retained through the stabilisation. For more details and analogous arguments in the continuous case, see \citet{schroeder2023energy}.
\subsection{Mean Pooling Transform}\label{subsct:mean-pooling-appendix}
We describe the mean-pooling transform on the example of image data which takes values in the space $\{0, 1\}^{h\times w}$. We fix a window size $s$ and reshape each data-point into blocks of size $s\times s$, i.e.
\begin{equation*}
    \{0, 1\}^{h\times w} \to \{0, 1\}^{s\times s \times \frac{h}{s} \times \frac{w}{s}}\,, \quad \x\mapsto \bar \x
\end{equation*}
The mean pooling transform $g_{\mathrm{pool}}$ computes the average over each block $\bar \x_{\bullet, \bullet, i, j}$ for $i=1, 2, \dots, h/s$ and $j=1, 2, \dots, w/s$. The corresponding preimage of the mean pooling transform is given by the set of points which are identical to $\x$ up to block-wise permutation, i.e.
\begin{equation*}
    g^{-1}(g_\mathrm{pool}(\x))=\{ \x'\in \mathcal X: \text{ there exist } \pi_{i, j}\in S_{s\times s} \text{ s.t. } \bar\x'_{l, k, i, j} = \bar\x'_{\pi_{i, j}(l, k), i, j} \text{ for all } l, k, i, j\}
\end{equation*}
where $S_{s\times s}$ denotes the permutation group for matrices of size $s\times s$. In practice, the mean-pooled data point has to never be computed, only the block wise permutations of the data point are required. Consequently, we obtain negative samples through $\x_-^{i, j} \sim \mathcal U(g^{-1}(g_\mathrm{pool}(\x^i)))$, i.e. via block wise permutation of the entries of each data point $\x^i$.

Strictly speaking, this transformation violates the assumptions of \cref{theorem-energy-discrepancy} for data points that only consist of blocks that average to $1$ or $0$. Since this is only the case for a small set of the state space, we assume this violation to be negligible. 
\subsection{Grid Neighborhood}\label{subsct:grid-neighbourhood-appendix}
The grid neighbourhood for $\x\in \{0, 1\}^d$ is constructed as
\begin{equation*}
    \mathcal N_{\mathrm{grid}}(\x) = \{ \y\in \{0, 1\}^d\, :\, \y-\x = \pm \e_k,\, k = 1, 2, \dots, d\}
\end{equation*}
where $\e_k$ is a vector of zeros with a one in the k-th entry. This neighbourhood structure is symmetric, i.e. $\mathcal N_{\mathrm{grid}}^{-1}(\y) = \mathcal N_{\mathrm{grid}}(\y)$. Consequently, the negative samples are created by sampling from
\begin{equation*}
    \x_-^{i, j} \sim \mathcal U(\mathcal N_{\mathrm{grid}}( \y^i)) \quad \text{ with }\quad \y^i\sim  \mathcal U(\mathcal N_{\mathrm{grid}}( \x^i))
\end{equation*}
Notice that each negative sample is the second neighbour of the positive sample, and with a small chance the positive sample itself.

\subsection{Directed Neighbourhood Structures} \label{appendix-sec-directed-neighbour-structures}
More generally, the neighbourhood structure may form a non-symmetric directed graph for which the neighbourhood maps $\mathcal N^{-1}$ and $\mathcal N$ don't coincide. In this case, an additional weighting-term is introduced. We denote the number of neighbours of $\x$ as $K_\x = \vert \mathcal N(\x)\vert $ and the number of elements of which $\y$ is a neighbour as $K'_\y = \vert \mathcal N^{-1}(\y)\vert$. The forward transition density is given by the uniform distribution, i.e.
\begin{equation}
    q(\y\vert\x) = \left\{\begin{matrix}
        1/K_\x & \text{if} &\y\in\mathcal N(\x) \\
        0 & \text{else} &
    \end{matrix}\right.
\end{equation}

We then have
\begin{align*}
    U_{\mathcal N}(\y) &= \log \sum_{\x'\in\mathcal X} q(\y\vert\x') \exp(-U(\x')) \\\notag
    &= \log \sum_{\x'\in \mathcal N^{-1}(\y)} \frac{1}{K_{\x'}}\exp(-U(\x')) \\\notag
    &= \log \frac{1}{K'_\y}\sum_{\x'\in \mathcal N^{-1}(\y)} \frac{K'_\y}{K_{\x'}}\exp(-U(\x')) \\\notag
    &= \log \mathbb E_{\x' \sim \mathcal U(\{\mathcal N^{-1}(\y)\})}[\omega_{\y\x'}\exp(-U(\x'))]
\end{align*}
where we introduced the weighting term $\omega_{\y\x'} = K'_\y / K_{\x'}$.
\subsection{Consistency of our Approximation}\label{subsct:consistency-proof-appendix}
The following proof is similar to \citet{schroeder2023energy}. We first restate the consistency result:
\restatetheoremtwo*
\begin{proof}
    For $N$ data points $\x_+^i\sim p_\data$ and perturbed points $\y^i\sim q(\cdot \vert\x_+^i)$ denote the $M$ corresponding negative samples by $\x_-^{i, j}\sim p_{\mathrm{neg}, q, \y^i}$. Notice that the distribution of the negative samples depends on $\y^i$. Using the triangle inequality, we can upper bound the difference $\lvert \ED_{q}(p_\data, U)-\mathcal L_{q, M, w}(U)\rvert$ by upper bounding the following two terms, individually:
    \begin{align*}
          \Bigg\lvert \ED_{q}(p_\data, U) - & \frac{1}{N}\sum_{i=1}^N \log \mathbb E\left[\exp(U(\x_+^i)-U(\x_-^{i, j})\,\Big\vert\, \x_+^i, \y^i\right]\Bigg\rvert\\\notag
        + &\left\lvert \frac{1}{N}\sum_{i=1}^N \log \mathbb E\left[\exp(U(\x_+^i)-U(\x_-^{i, j})\,\Big\vert\, \x_+^i, \y^i\right] - \mathcal L_{q, M, w}(U)\right\rvert
    \end{align*}
The conditioning expresses that the expectation is only taken in $\x_-^{i, j}\sim p_{\mathrm{neg}, q, \y^i}$ while keeping the values of the random variables $\x_+^i$ and $\y^i$ fixed. The first term can be bounded by a sequence $\varepsilon_N\xrightarrow{a.s.} 0$ due to the normal strong law of large numbers. For the second term one needs to consider that the distribution $p_{\mathrm{neg}, q, \y^i}$ depends on the random variable $\y^i$. For this reason, we notice that $\x_-^{i, j}$ are conditionally indepedent given $\x_+^i, \y^i$ and employ a conditional version of the strong law of large numbers \citep[Theorem 4.2]{ConditionalLLN} to obtain
  \begin{equation*}
    \frac{1}{M}\sum_{j=1}^M \exp\left(U(\x_+^i) - U(\x_-^{i, j})\right)\xrightarrow{a.s.} \mathbb E\left[\exp(U(\x_+^i)-U(\x_-^{i, j})\,\Big\vert\, \x_+^i, \y^i\right]
  \end{equation*}
Next, we have that the deterministic sequence $w/M\to 0$. Thus, adding the stabilisation $w/M$ does not change the limit in $M$. Furthermore, since the logarithm is continuous, the limit also holds after applying the logarithm. Finally, the estimate translates to the sum by another application of the triangle inequality:
For each $i = 1, 2, \dots, N$ there exists a sequence $\varepsilon_{i, M}\xrightarrow{a.s.} 0$ such that
\begin{align*}
    &\left\lvert \frac{1}{N}\sum_{i=1}^N \log \mathbb E\left[\exp(U(\x_+^i)-U(\x_-^{i, j})\,\Big\vert\, \x_+^i, \y^i\right] - \mathcal L_{q, M, w}(U)\right\rvert \\\notag
    & \leq \frac{1}{N}\sum_{i=1}^N \left\lvert \log \mathbb E\left[\exp(U(\x_+^i)-U(\x_-^{i, j})\,\Big\vert\, \x_+^i, \y^i\right] - \log\frac{1}{M}\sum_{j=1}^M \exp\left(U(\x_+^i) - U(\x_-^{i, j})\right) \right\rvert \\\notag
    & <  \frac{1}{N}\sum_{i=1}^N \varepsilon_{i, M} \leq \max(\varepsilon_{1, M}, \dots, \varepsilon_{N, M})\,.
\end{align*}
Hence, for each $\varepsilon>0$ there exists an $N\in\mathbb N$ and an $M(N)\in \mathbb N$ such that $\lvert \ED_{q}(p_\data, U)-\mathcal L_{q, M(N), w}(U)\rvert < \varepsilon$ almost surely.
\end{proof}
\section{Related Work}
\paragraph{Contrastive loss functions}
Our work is based on an unpublished work on energy discrepancies in the continuous case \citep{schroeder2023energy}. The motivation for such constructed loss functions lies in the data processing inequality. A similar loss has been suggested before as KL contraction divergence \citep{Lyu2011KLContractions}, however, only for its theoretical properties. Interestingly, the structure of the stabilised energy discrepancy loss shares similarities with other contrastive losses such as \citet{ConditionalNoiseContrastiveEstimation,gutmann2010noise,Oord2018RepresentationLearning}. This poses the question of possible classification-based interpretations of energy discrepancy and of the $w$-stabilisation.

\paragraph{Contrastive divergence and Sampling.}
Discrete training methods for energy-based models largely rely on contrastive divergence methods, thus motivating a lot of work on discrete sampling and proposal methods. Improvements of the standard Gibbs method were proposed by \citet{zanella2020informed} through locally informed proposals. The method was extended to include gradient information \citep{grathwohl2021oops} to drastically reduce the computational complexity of flipping bits of binary valued data and to flipping bits in several places \citep{sun2022path, emami2023plug, sun2022optimalscaling}. Finally, discrete versions of Langevin sampling have been introduced based on this idea \citep{zhang2022langevin,rhodes2022enhanced,sun2023discrete}. Consequently, most current implementations of contrastive divergence use multiple steps of a gradient based discrete sampler. Alternatively, energy-based models can be trained using generative flow networks which learns a Markov chain to construct data by optimising a given reward function. The Markov chain can be used to obtain samples for contrastive divergence without MCMC from the EBM \citep{zhang2022generative}.

\paragraph{Other training methods for discrete EBMs.}
There also exist some MCMC free approaches for training discrete EBMs.
Our work is most similar to concrete score matching \citep{Meng2022concreteSM} which uses neighbourhood structures to define a replacement of the continuous score function. Another sampling free approach for training discrete EBMs is ratio matching \citep{hyvarinen2007some,lyu2012interpretation}. However is has been found that also for ratio matching, gradient information drastically improves the performance \citep{liu2023RMwGGIS}. Moreover, \cite{dai2020learning} proposed to apply variational approaches to train discrete EBMs instead of MCMC. \cite{eikemaapproximate} replaced the widely-used Gibbs algorithms with quasi-rejection sampling to trade off the efficiency and accuracy of the sampling procedure. The perturb-and-map \citep{papandreou2011perturb} is also recently utilised to sample and learn in discrete EBMs \citep{lazaro2021perturb}.

\section{More about Experiments}

\begin{wraptable}{r}{0.6\linewidth}
\vspace{-3mm}
    \small
    \centering
    \caption{Mean negative log-RMSE (higher is better) between the learned connectivity matrix $J_\phi$ and the true matrix $J$ for different values of $D$ and $\sigma$. The results of baselines are directly taken from \cite{zhang2022generative}.} 
    \label{tab:ising_results}
    \resizebox{\linewidth}{!}{
   \begin{tabular}{lccccccc}
        \toprule
         & \multicolumn{5}{c}{$D=10^2$} & \multicolumn{2}{c}{$D=9^2$} \\
        \cmidrule(lr){2-6}\cmidrule(lr){7-8}
         Method $\backslash$ $\sigma$ & $0.1$ & $0.2$ & $0.3$ & $0.4$ & $0.5$ & $-0.1$ & $-0.2$  \\ 
         \midrule
        Gibbs & $4.8$ & $4.7$ & $\bf 3.4$ & $\bf 2.6$ & $\bf 2.3$ & $4.8$ & $4.7$ \\
        GWG & $4.8$ & $4.7$ & $\bf 3.4$ & $\bf 2.6$ & $\bf 2.3$ & $4.8$ & $4.7$ \\
        EB-GFN & $\bf 6.1$ & $\bf 5.1$ & $3.3$ & $\bf 2.6$ & $\bf 2.3$  & $\bf 5.7$ & $\bf 5.1$ \\
        ED-Bern (ours) & $5.1$ & $4.0$ & $2.9$ & $2.5$ &  $\bf 2.3$ & $5.1$ & $4.3$ \\
        ED-Pool (ours) & $4.9$ & $3.6$ & $3.2$ & $\bf 2.6$ & $\bf 2.3$ & $4.9$ & $3.6$ \\
        ED-Grid (ours) & $4.6$ & $4.0$ & $3.1$ & $\bf 2.6$ & $\bf 2.3$ & $4.5$ & $4.0$ \\
        \bottomrule
    \end{tabular}
    }
\vspace{-2mm}
\end{wraptable}

\subsection{Training Ising Models} \label{appendix-sec-training-ising-models}

\textbf{Experimental Details.} As in \cite{grathwohl2021oops,zhang2022generative,zhang2022langevin}, we train a learnable connectivity matrix $J_\phi$ to estimate the true matrix $J$ in the Ising model. To generate the training data, we simulate Gibbs sampling with $1,000,000$ steps for each instance to construct a dataset of $2,000$ samples. For energy discrepancy, we choose $w=1,M=32$ for all variants, $\epsilon=0.1$ in ED-Bern, and the window side is $\sqrt{D} \times \sqrt{D}$ in ED-Pool. The parameter $J_\phi$ is learned by the Adam \citep{kingma2014adam} optimizer with a learning rate of $0.0001$ and a batch size of $256$. Following \cite{zhang2022generative}, all models are trained with an $l_1$ regularization with a coefficient in $\{10, 5, 1, 0.1, 0.01\}$ to encourage sparsity. The other setting is basically the same as Section F.2 in \cite{grathwohl2021oops}. We report the best result for each setting using the same hyperparameter searching protocol for all methods.

\textbf{Quantitative Results.}
We consider $D=10\times 10$ grids with $\sigma = 0.1, 0.2, \dots, 0.5$ and $D=9\times 9$ grids with $\sigma=-0.1, -0.2$. The methods are evaluated by computing the negative log-RMSE between the estimated $J_\phi$ and the ture matrix $J$. As shown in \cref{tab:ising_results}, our methods demonstrate comparable results to the baselines and, in certain settings, even outperform Gibbs and GWG, indicating that energy discrepancy is able to discover the underlying structure within the data.

\begin{figure}[!t]
\centering
\begin{minipage}{\textwidth}
    \begin{minipage}[t]{\samplempwid\textwidth}
    \centering
    \small{2spirals}\\
    \includegraphics[width=\samplefigwid,
    trim=20 10 20 10,clip
    ]{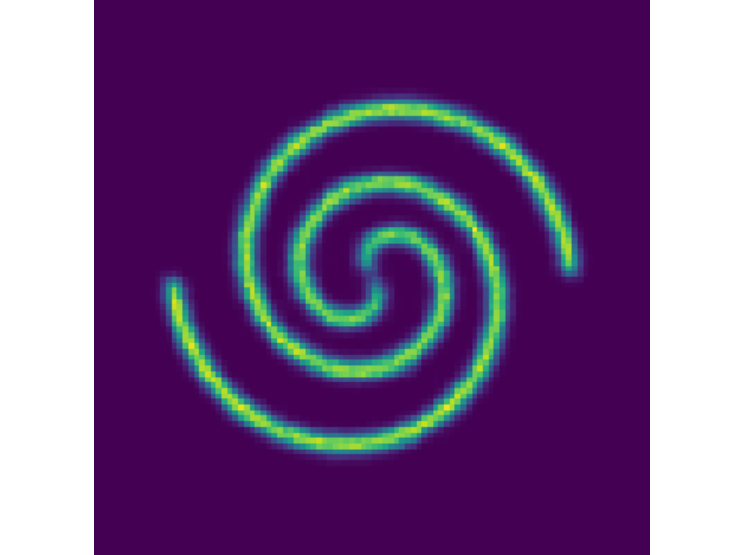}
    \end{minipage}
\hspace{-\samplehinterval}
    \begin{minipage}[t]{\samplempwid\textwidth}
    \centering
    \small{8gaussians}\\
    \includegraphics[width=\samplefigwid
    ,trim=20 10 20 10,clip
    ]{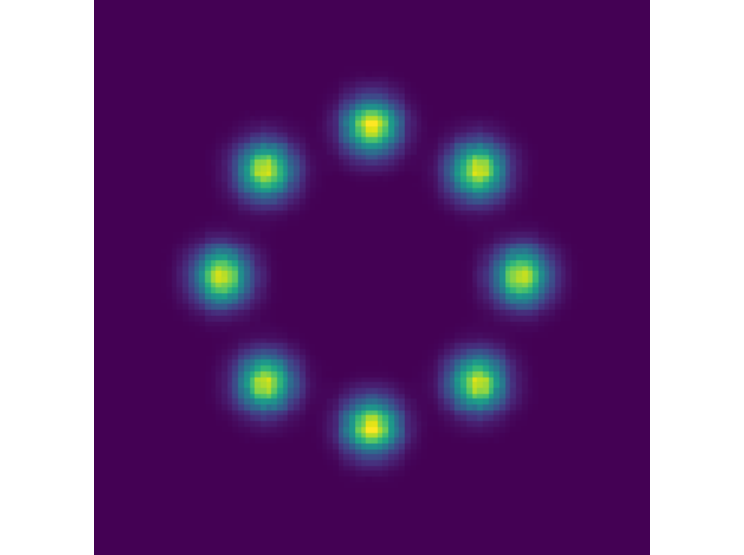}
    \end{minipage}
\hspace{-\samplehinterval}
    \begin{minipage}[t]{\samplempwid\textwidth}
    \centering
    \small{circles}\\
    \includegraphics[width=\samplefigwid
    ,trim=20 10 20 10,clip
    ]{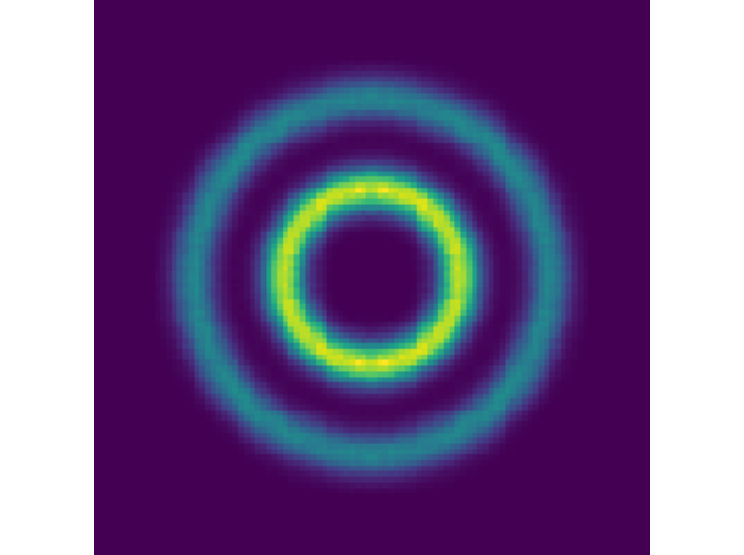}
    \end{minipage}
\hspace{-\samplehinterval}
    \begin{minipage}[t]{\samplempwid\textwidth}
    \centering
    \small{moons}\\
    \includegraphics[width=\samplefigwid
    ,trim=20 10 20 10,clip
    ]{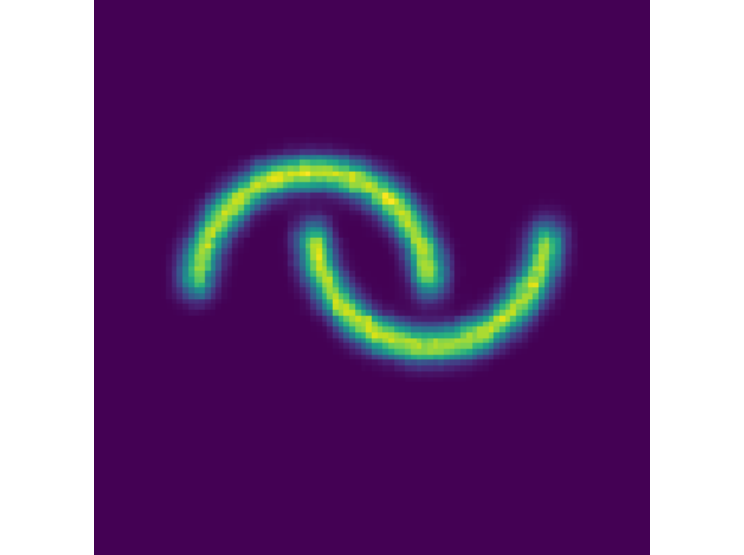}
    \end{minipage}
\hspace{-\samplehinterval}
    \begin{minipage}[t]{\samplempwid\textwidth}
    \centering
    \small{pinwheel}\\
    \includegraphics[width=\samplefigwid
    ,trim=20 10 20 10,clip
    ]{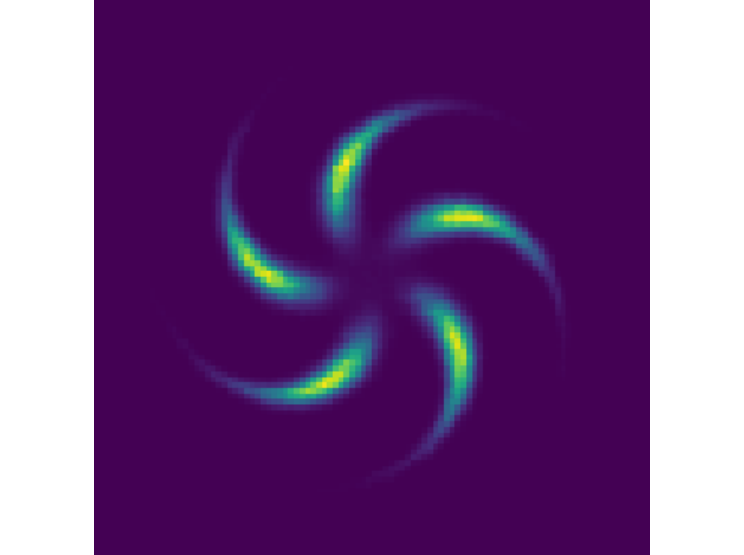}
    \end{minipage}
\hspace{-\samplehinterval}
    \begin{minipage}[t]{\samplempwid\textwidth}
    \centering
    \small{swissroll}\\
    \includegraphics[width=\samplefigwid
    ,trim=20 10 20 10,clip
    ]{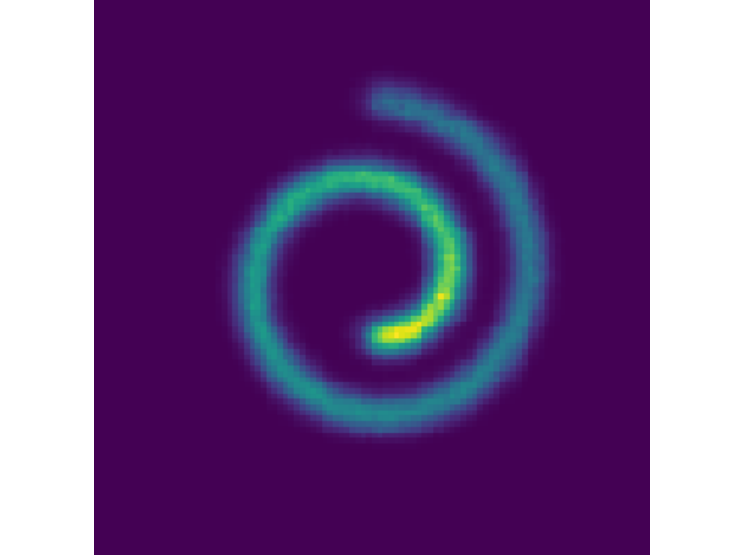}
    \end{minipage}
\hspace{-\samplehinterval}
    \begin{minipage}[t]{\samplempwid\textwidth}
    \centering
    \small{checkerboard}\\
    \includegraphics[width=\samplefigwid
    ,trim=20 10 20 10,clip
    ]{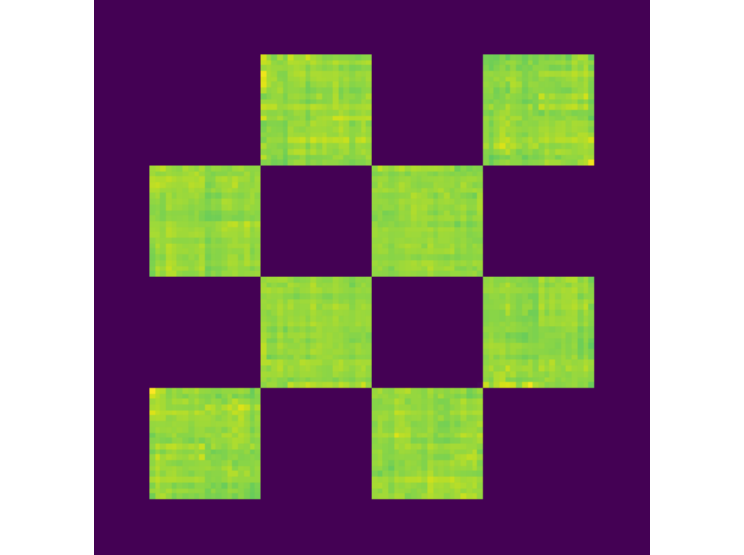}
    \end{minipage}
\hspace{-\samplehinterval}
\end{minipage}
\begin{minipage}{\textwidth}
    \begin{minipage}{\samplempwid\textwidth}
    \centering
    \includegraphics[width=\samplefigwid,trim=20 10 20 10,clip,clip
    ]{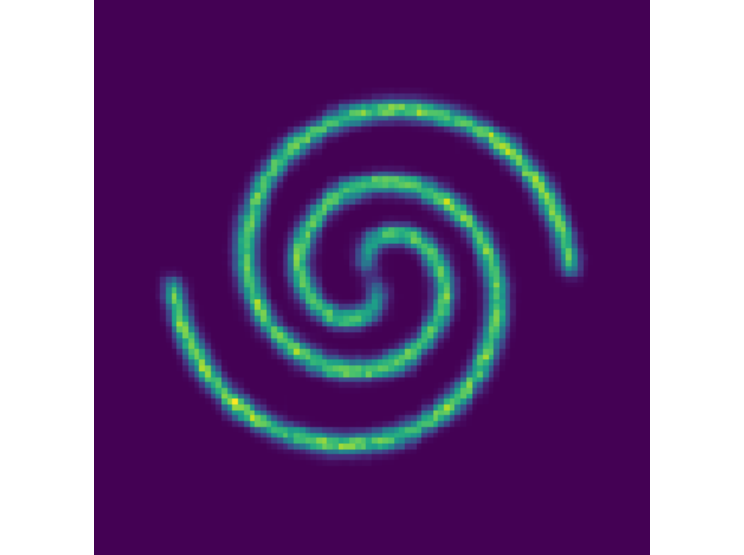}
    \end{minipage}
\hspace{-\hinterval}
    \begin{minipage}{\samplempwid\textwidth}
    \centering
    \includegraphics[width=\samplefigwid,trim=20 10 20 10,clip
    ]{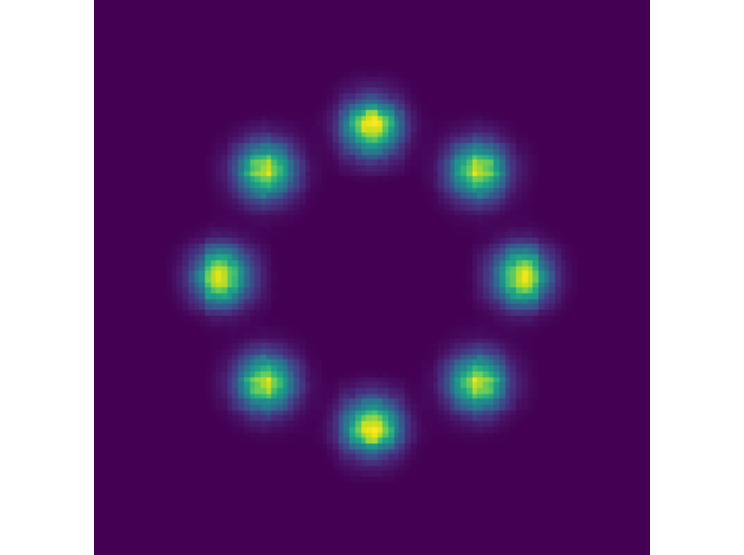}
    \end{minipage}
\hspace{-\hinterval}
    \begin{minipage}{\samplempwid\textwidth}
    \centering
    \includegraphics[width=\textwidth,trim=20 10 20 10,clip
    ]{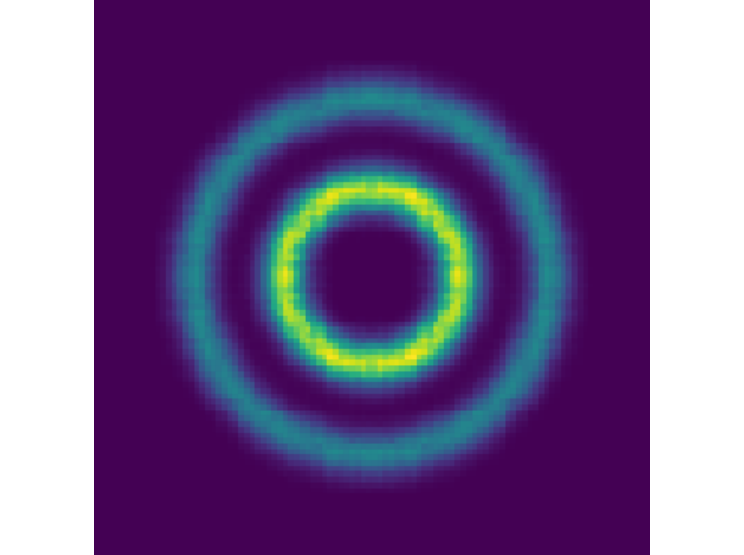}
    \end{minipage}
\hspace{-\hinterval}
    \begin{minipage}{\samplempwid\textwidth}
    \centering
    \includegraphics[width=\textwidth,trim=20 10 20 10,clip
    ]{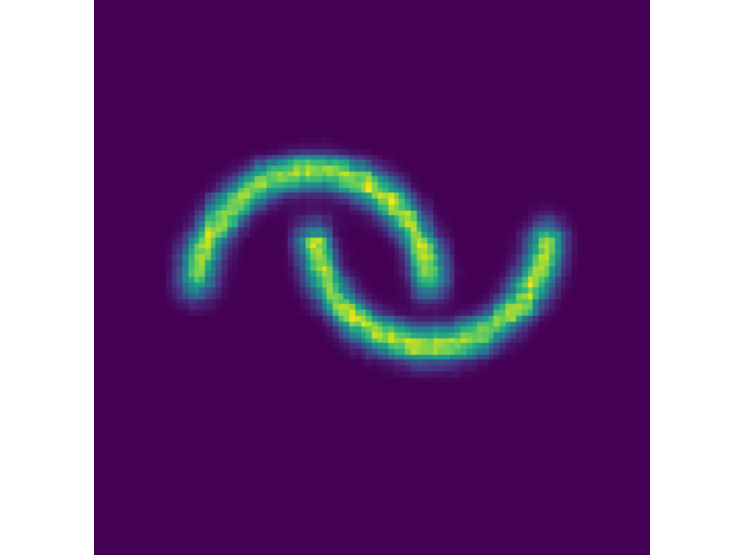}
    \end{minipage}
\hspace{-\hinterval}
    \begin{minipage}{\samplempwid\textwidth}
    \centering
    \includegraphics[width=\textwidth,trim=20 10 20 10,clip
    ]{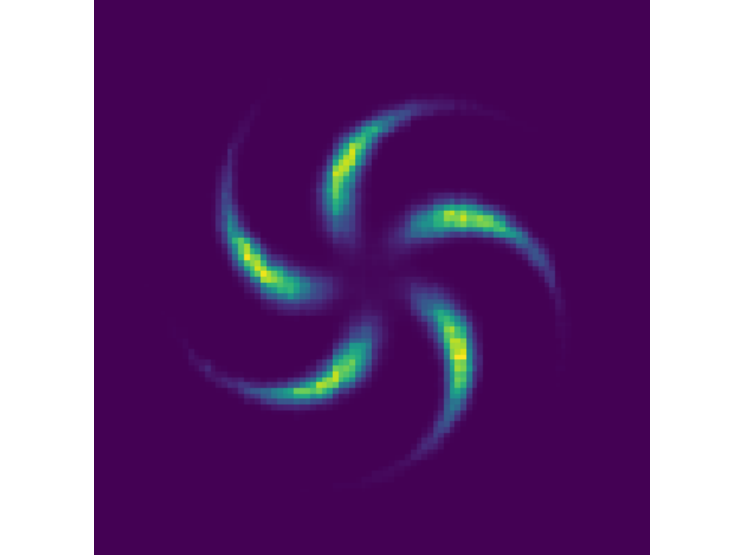}
    \end{minipage}
\hspace{-\hinterval}
    \begin{minipage}{\samplempwid\textwidth}
    \centering
    \includegraphics[width=\textwidth,trim=20 10 20 10,clip
    ]{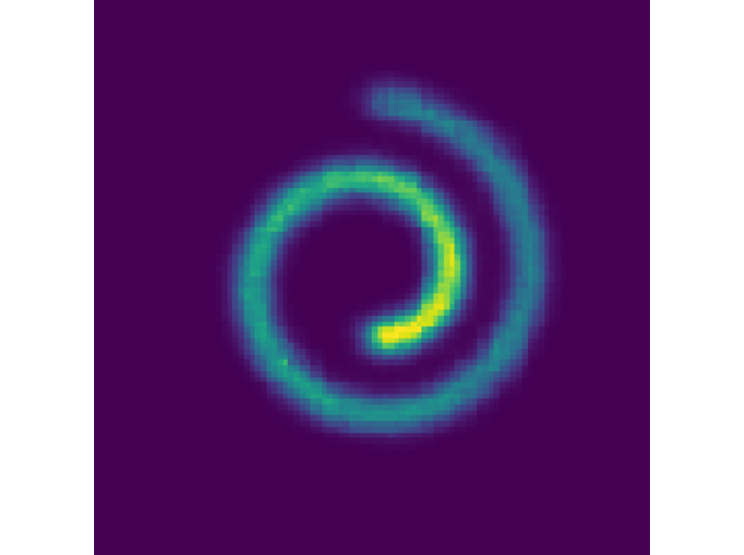}
    \end{minipage}
\hspace{-\hinterval}
    \begin{minipage}{\samplempwid\textwidth}
    \centering
    \includegraphics[width=\textwidth,trim=20 10 20 10,clip
    ]{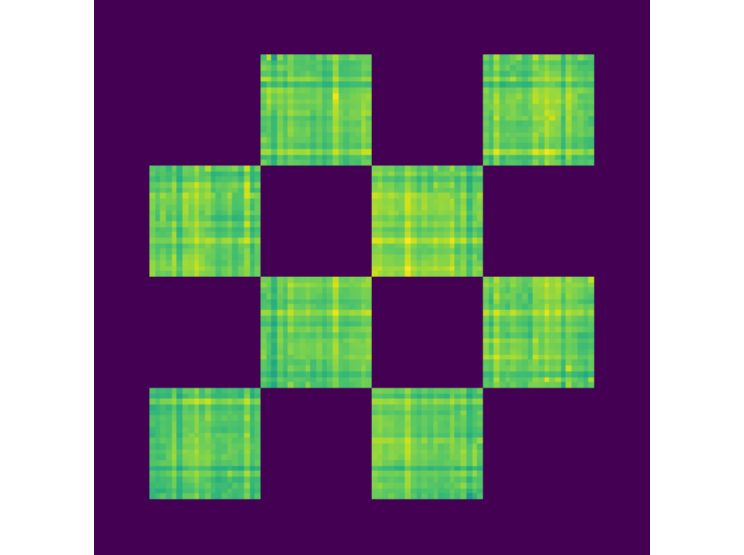}
    \end{minipage}
\hspace{-\hinterval}
\end{minipage}
\begin{minipage}{\textwidth}
    \begin{minipage}{\samplempwid\textwidth}
    \centering
    \includegraphics[width=\samplefigwid,trim=20 10 20 10,clip,clip
    ]{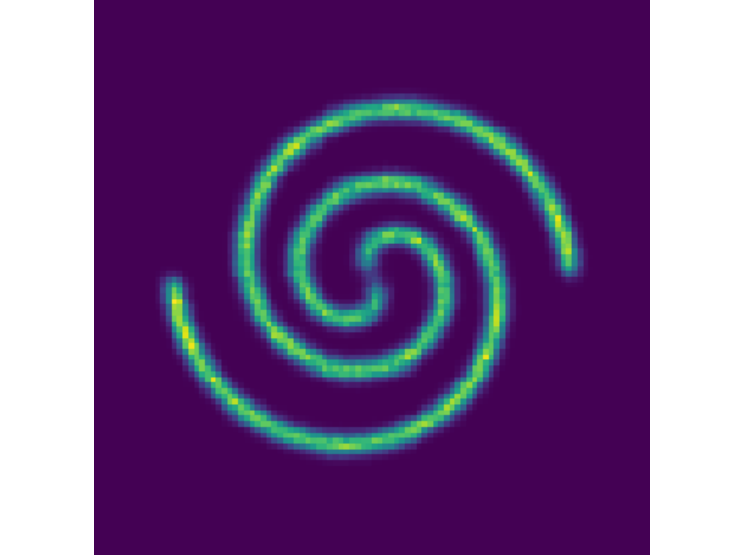}
    \end{minipage}
\hspace{-\hinterval}
    \begin{minipage}{\samplempwid\textwidth}
    \centering
    \includegraphics[width=\samplefigwid,trim=20 10 20 10,clip
    ]{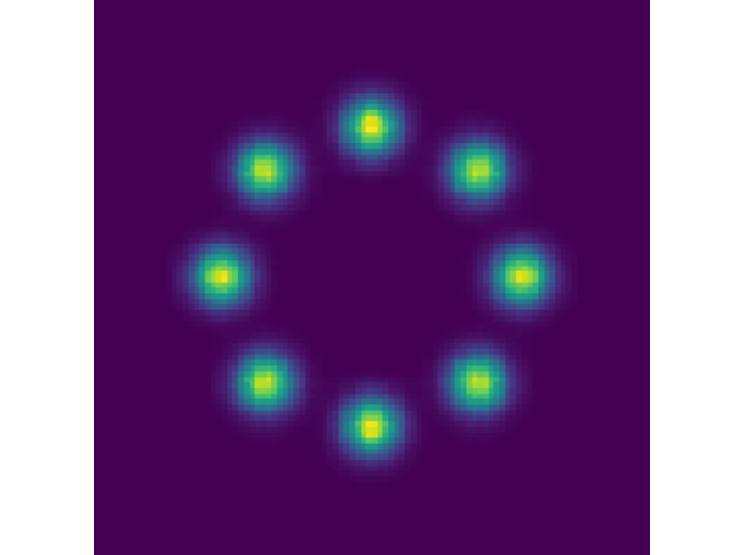}
    \end{minipage}
\hspace{-\hinterval}
    \begin{minipage}{\samplempwid\textwidth}
    \centering
    \includegraphics[width=\textwidth,trim=20 10 20 10,clip
    ]{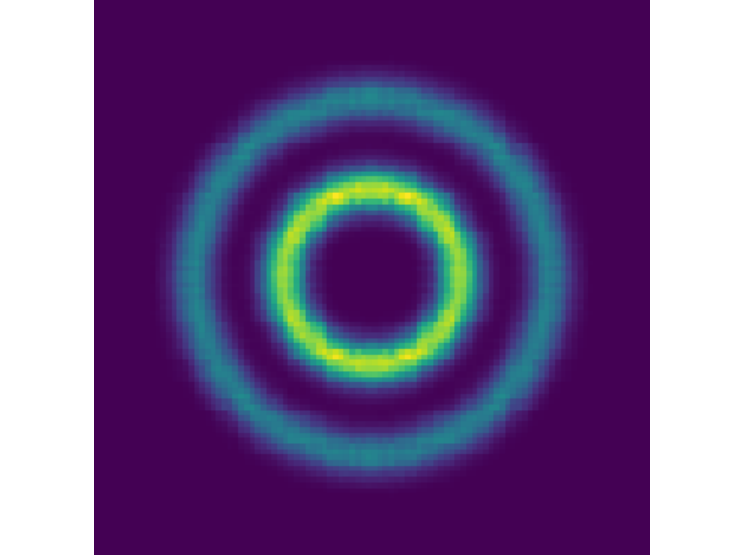}
    \end{minipage}
\hspace{-\hinterval}
    \begin{minipage}{\samplempwid\textwidth}
    \centering
    \includegraphics[width=\textwidth,trim=20 10 20 10,clip
    ]{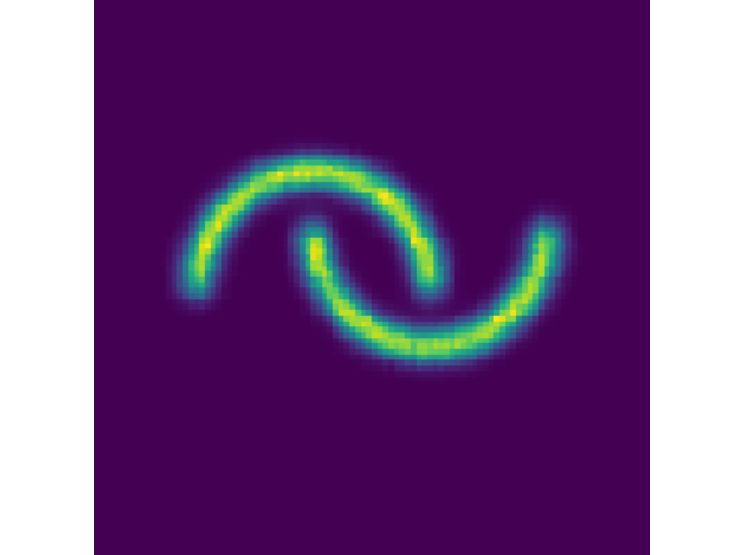}
    \end{minipage}
\hspace{-\hinterval}
    \begin{minipage}{\samplempwid\textwidth}
    \centering
    \includegraphics[width=\textwidth,trim=20 10 20 10,clip
    ]{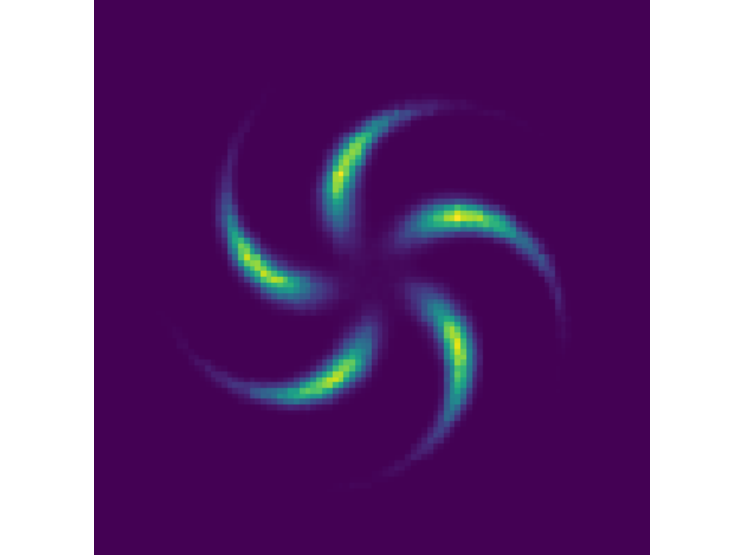}
    \end{minipage}
\hspace{-\hinterval}
    \begin{minipage}{\samplempwid\textwidth}
    \centering
    \includegraphics[width=\textwidth,trim=20 10 20 10,clip
    ]{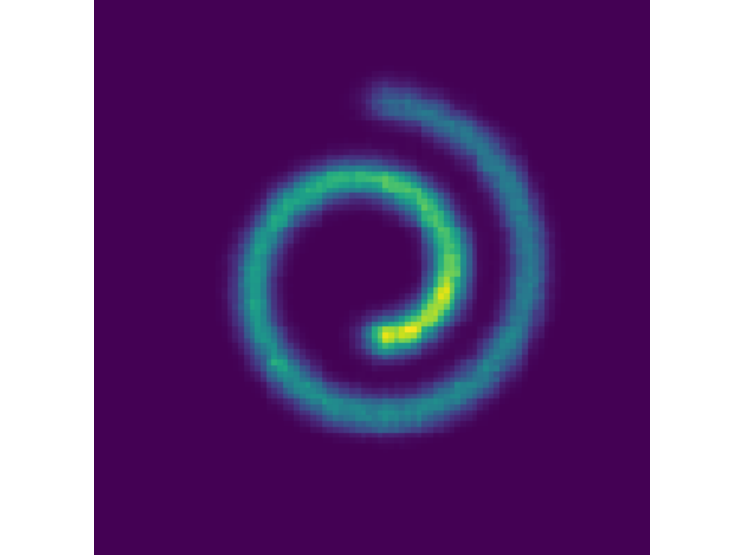}
    \end{minipage}
\hspace{-\hinterval}
    \begin{minipage}{\samplempwid\textwidth}
    \centering
    \includegraphics[width=\textwidth,trim=20 10 20 10,clip
    ]{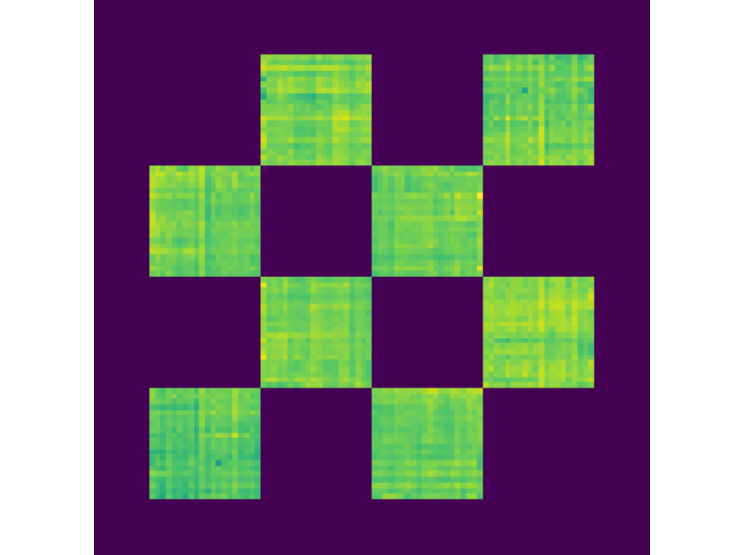}
    \end{minipage}
\hspace{-\hinterval}
\end{minipage}
\centering
\vspace{-2mm}
\caption{
Visualization of the energy function. Top to bottom: ED-Bern, ED-Pool, ED-Grid.
}
\vspace{-2mm}
\label{fig:toy_result_visualisation}
\end{figure}

\begin{figure}[!t]
\begin{minipage}{\textwidth}
    \begin{minipage}{\samplempwid\textwidth}
    \centering
    2spirals
    \includegraphics[width=\textwidth]{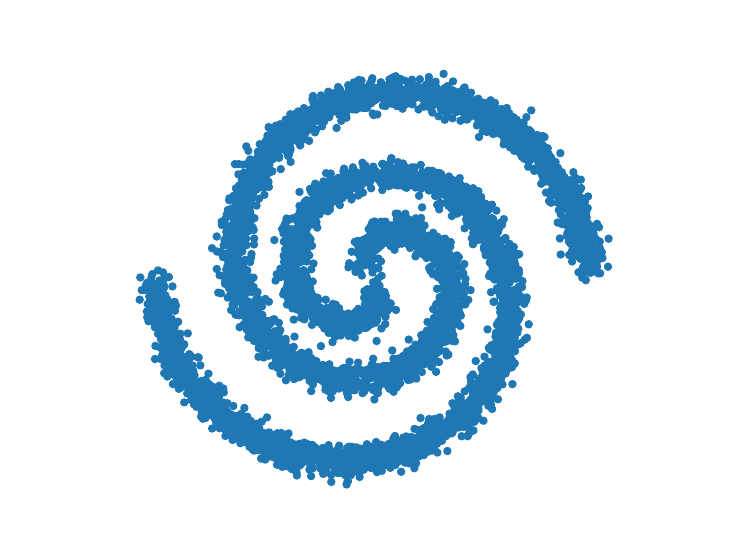}
    \end{minipage}
\hspace{-\samplehinterval}
    \begin{minipage}{\samplempwid\textwidth}
    \centering
    8gaussians
    \includegraphics[width=\textwidth]{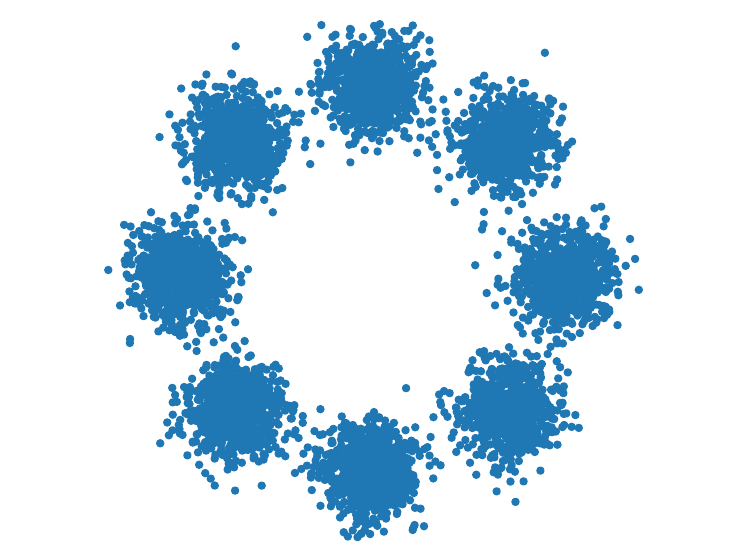}
    \end{minipage}
\hspace{-\samplehinterval}
    \begin{minipage}{\samplempwid\textwidth}
    \centering
    circles
    \includegraphics[width=\textwidth]{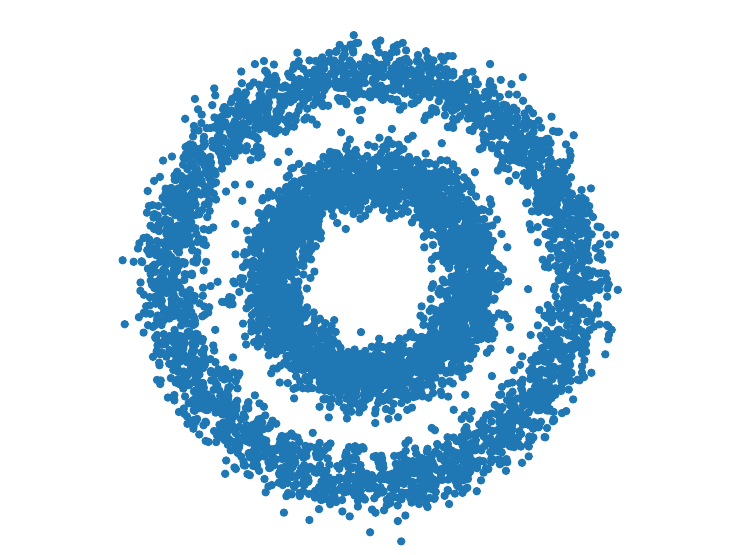}
    \end{minipage}
\hspace{-\samplehinterval}
    \begin{minipage}{\samplempwid\textwidth}
    \centering
    moons
    \includegraphics[width=\textwidth]{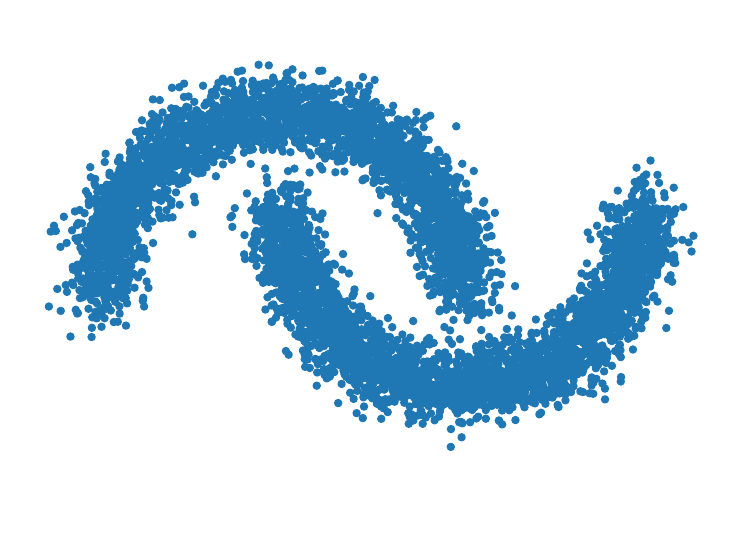}
    \end{minipage}
\hspace{-\samplehinterval}
    \begin{minipage}{\samplempwid\textwidth}
    \centering
    pinwheel
    \includegraphics[width=\textwidth]{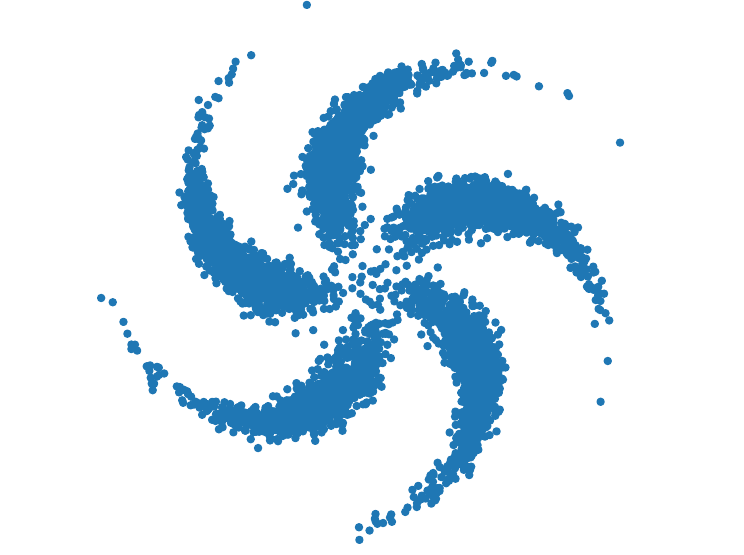}
    \end{minipage}
\hspace{-\samplehinterval}
    \begin{minipage}{\samplempwid\textwidth}
    \centering
    swissroll
    \includegraphics[width=\textwidth]{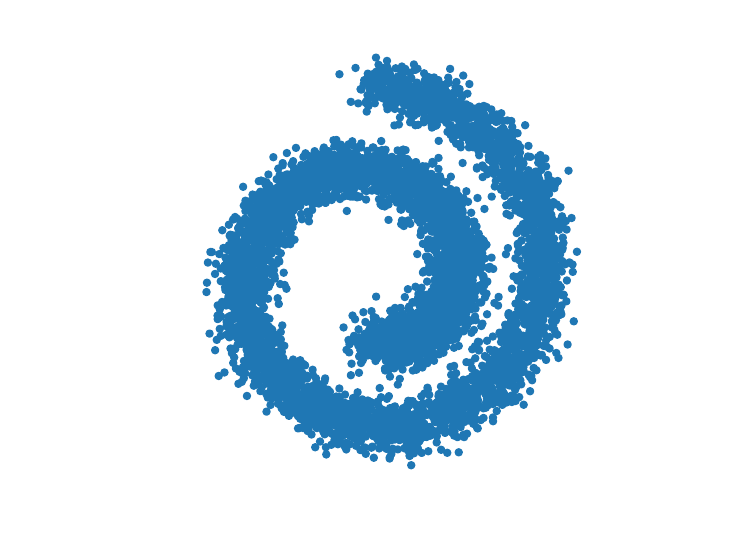}
    \end{minipage}
\hspace{-\samplehinterval}
    \begin{minipage}{\samplempwid\textwidth}
    \centering
    checkerboard
    \includegraphics[width=\textwidth]{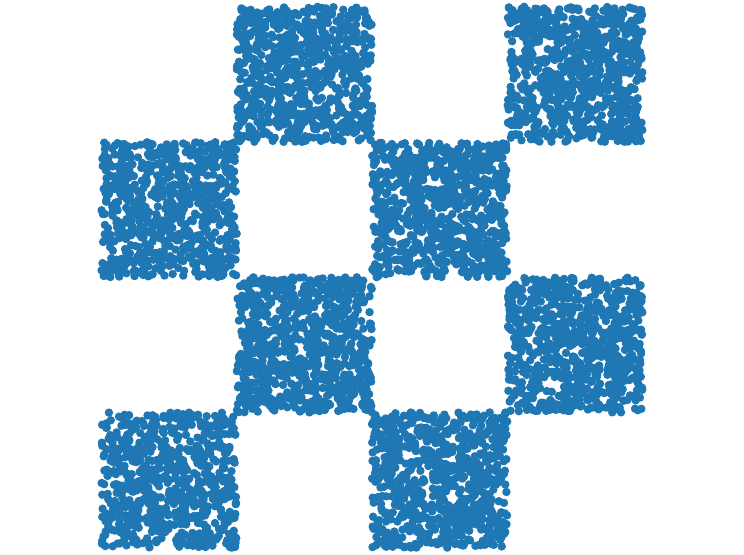}
    \end{minipage}
\hspace{-\samplehinterval}
\end{minipage}
\caption{Visualization of samples for discrete density estimation from ground truth.}
\label{fig:true_synthetic_samples}
\end{figure}

\subsection{Discrete Density Estimation} \label{appendix-sec-discrete-density-estimation}

\textbf{Experimental Details.}
This experiment keeps a consistent setting with \cite{dai2020learning}. We first generate 2D floating-points from a continuous distribution $\hat{p}$ which lacks a closed form but  can be easily sampled. Then, each sample $\hat{\x} := [\hat{\x}_1, \hat{\x}_2] \in \mathbb{R}^2$ is converted to a discrete data point $\x \in \{0,1\}^{32}$ using Gray code. To be specific, given $\hat{\x} \sim \hat{p}$, we quantise both $\hat{\x}_1$ and $\hat{\x}_2$ into $16$-bits binary representations via Gray code \citep{gray1953pulse}, and concatenate them together to obtain a $32$-bits vector $\x$. As a result, the probabilistic mass function in the discrete space is $p(\x) \propto \hat{p} \left( \left[ \operatorname{GrayToFloat}(\x_{1:16}), \operatorname{GrayToFloat}(\x_{17:32}) \right] \right)$. It is noteworthy that learning on this discrete space presents challenges due to the highly non-linear nature of the Gray code transformation.

The energy function is parameterised by a $4$ layer MLP with $256$ hidden dimensions and Swish \citep{ramachandran2017searching} activation. We train the EBM for $10^5$ steps and adopt an Adam optimiser with a learning rate of $0.002$ and a batch size of $128$ to update the parameter. For the energy discrepancy, we choose $w=1, M=32$ for all variants, $\epsilon=0.1$ in ED-Bern, and the window size is $32\times 1$ in ED-Pool. After training, we quantitatively evaluate all methods using the negative log-likelihood (NLL) and the maximum mean discrepancy (MMD). To be specific, the NLL metric is computed based on $4,000$ samples drawn from the data distribution, and the normalisation constant is estimated using importance sampling with $1,000,000$ samples drawn from a variational Bernoulli distribution with $p=0.5$. For the MMD metric, we follow the setting in \cite{zhang2022generative}, which adopts the exponential Hamming kernel with $0.1$ bandwidth. Moreover, the reported performances are averaged over 10 repeated estimations, each with $4,000$ samples, which are drawn from the learned energy function via Gibbs sampling.

\textbf{Qualitative Results.}
We qualitatively visualise the learned energy functions of our proposed approaches in \cref{fig:toy_result_visualisation}. To provide further insights into the oracle energy landscape, we also plot the ground truth samples in Figure \ref{fig:true_synthetic_samples}. The results clearly demonstrate that energy discrepancy effectively fits the data distribution, validating the efficacy of our methods.

\begin{figure}[!t]
\vspace{-5mm}
    \centering
    \includegraphics[width=1.\textwidth]
    {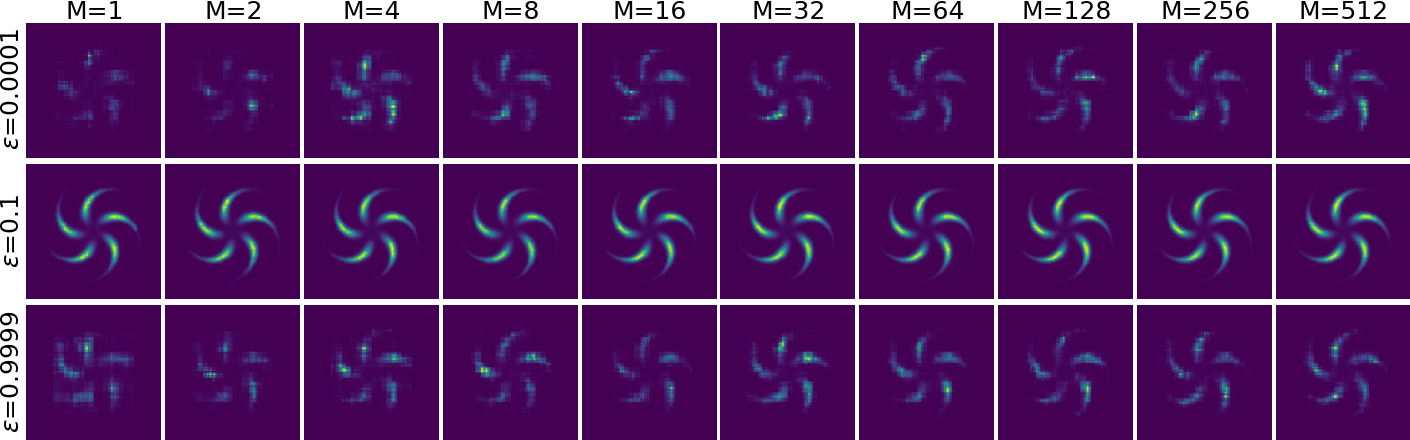}
    \vspace{-6mm}
    \caption{Density estimation results of ED-Bern on the pinwheel with different $\epsilon,M$ and $w=1$.}
    \vspace{-8mm}
    \label{fig:toy-understanding-epsm-appendix}
\end{figure}

\textbf{The Effect of $\epsilon$ in Bernoulli Perturbation.}
Perhaps surprisingly, we find that the proposed energy discrepancy loss with Bernoulli perturbation is very robust to the noise scalar $\epsilon$. 
\begin{wrapfigure}{r}{0.60\linewidth}
\vspace{-4mm}
\centering
\includegraphics[width=.60\textwidth]{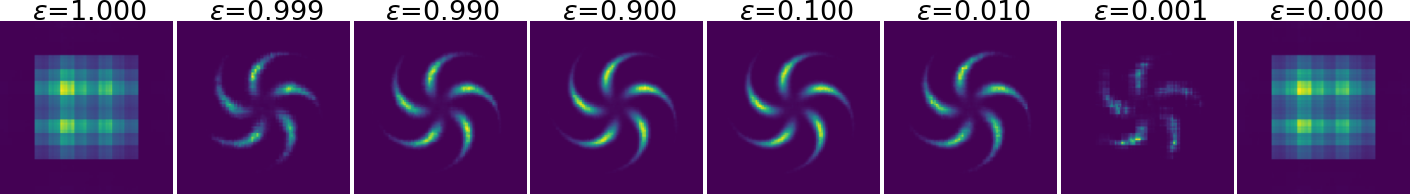}
\vspace{-6mm}
\caption{Density estimation results of ED-Bern on the pinwheel with different $\epsilon$ and $M=32,w=1$.}
\vspace{-2mm}
\label{fig:ed-bern-diff-eps-appendix}
\end{wrapfigure}
In \cref{fig:ed-bern-diff-eps-appendix}, w visualise the learned energy landscapes with different $\epsilon$. 
The results demonstrate that ED-Bern is able to learn faithful energy functions, even with extreme values of $\epsilon$, such as $\epsilon \in \{0.999, 0.001\}$. This highlights the robustness and effectiveness of our approach. In \cref{fig:toy-understanding-epsm-appendix}, we further show that, with $\epsilon \in \{0.9999, 0.0001\}$, ED-Bern can still learn a faithful energy landscape using a large value of $M$. However, when $\epsilon \in \{1, 0\}$, ED-Bern fails to work. It is noteworthy that the choice of $\epsilon$ is highly dependent on the specific structure of the dataset. While ED-Bern exhibits robustness to different values of $\epsilon$ in the synthetic data, we have observed that a large value of $\epsilon$ ($\epsilon \geq 0.1$) is not effective for discrete image modeling. 

\begin{wrapfigure}{r}{0.45\linewidth}
\vspace{-5mm}
\centering
\includegraphics[width=.45\textwidth]{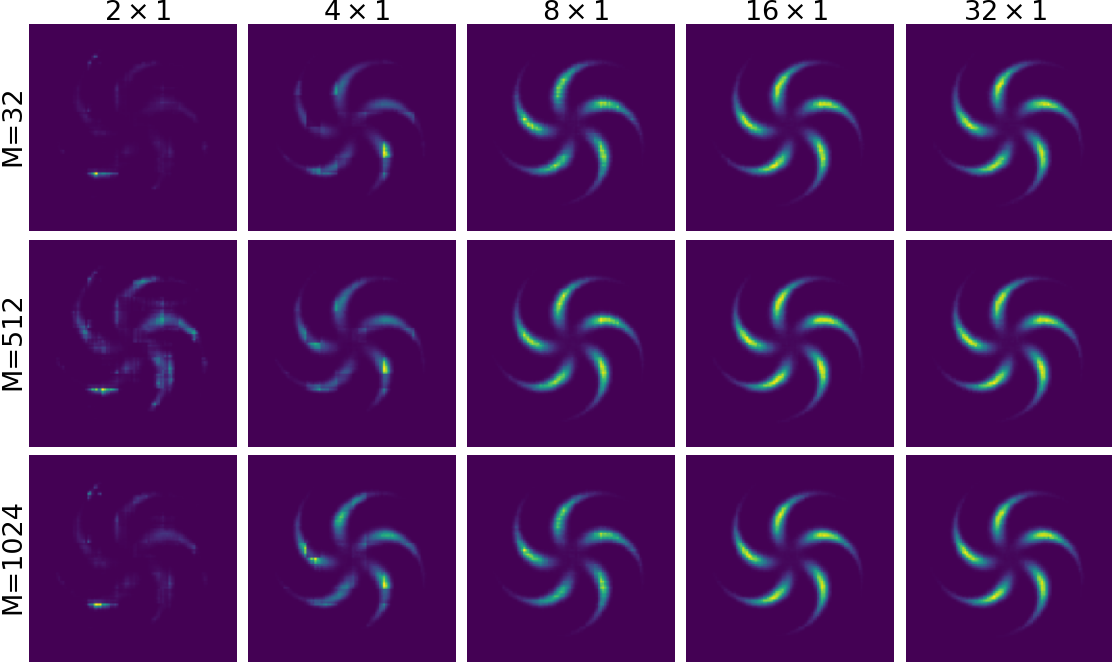}
\vspace{-6mm}
\caption{Density estimation results of ED-Pool on the pinwheel with different window sizes, $M$ and $w=1$.}
\vspace{-4mm}
\label{fig:ed-pool-diff-window-size-appendix}
\end{wrapfigure}
\textbf{The Effect of Window Size in Deterministic Transformation.}
To investigate the effectiveness of the window size in ED-Pool, we conduct experiments in \cref{fig:ed-pool-diff-window-size-appendix} with different window sizes. The results indicate that employing a small window size ({\it e.g.}, $2\times 1$) does not provide sufficient information for energy discrepancy to effectively learn the underlying data structure. Furthermore, our empirical findings suggest that solely increasing the value of $M$ is not a viable solution to address this issue. Again, the choice of the window size should depend on the underlying data structure. In the discrete image modelling, we find that even with a small window size ({\it i.e.}, $4 \times 4$), energy discrepancy yields an energy with low values on the data-support but rapidly diverging values outside of it. Therefore, it fails to learn a faithful energy landscape.

\textbf{Qualitatively Understanding the Effect of $w$ and $M$.}
The hyperparameters $w$ and $M$ play a crucial role in the estimation of energy discrepancy. Increasing $M$ can reduce the variance of the Monte Carlo estimation of the contrastive potential in \eqref{definition-contrastive-potential}, while a proper value of $w$ can improve the stabilisation of training. Here, we evaluate the effect of $w$ and $M$ on the variants of energy discrepancy in \cref{fig:toy-understanding-wm-edbern-appendix,fig:toy-understanding-wm-edpool-appendix,fig:toy-understanding-wm-edgrid-appendix}. Based on empirical observations, we observe that when $w=0$ and $M$ is small ({\it e.g.}, $M \leq 32$ for ED-Bern and $M \leq 64$ for ED-Pool and ED-Grid), energy discrepancy demonstrates rapid divergence and fails to converge. Additionally, we find that increasing $M$ can address this issue to some extent and introducing a non-zero value for $w$ can significantly stabilize the convergence, even with $M=1$. Moreover, larger $w$ tends to produce a flatter estimated energy landscapes, which also aligns with the findings in continuous scenarios of energy discrepancy \cite{schroeder2023energy}.

\begin{figure}[!t]
    \centering
    \includegraphics[width=1.\textwidth]
    {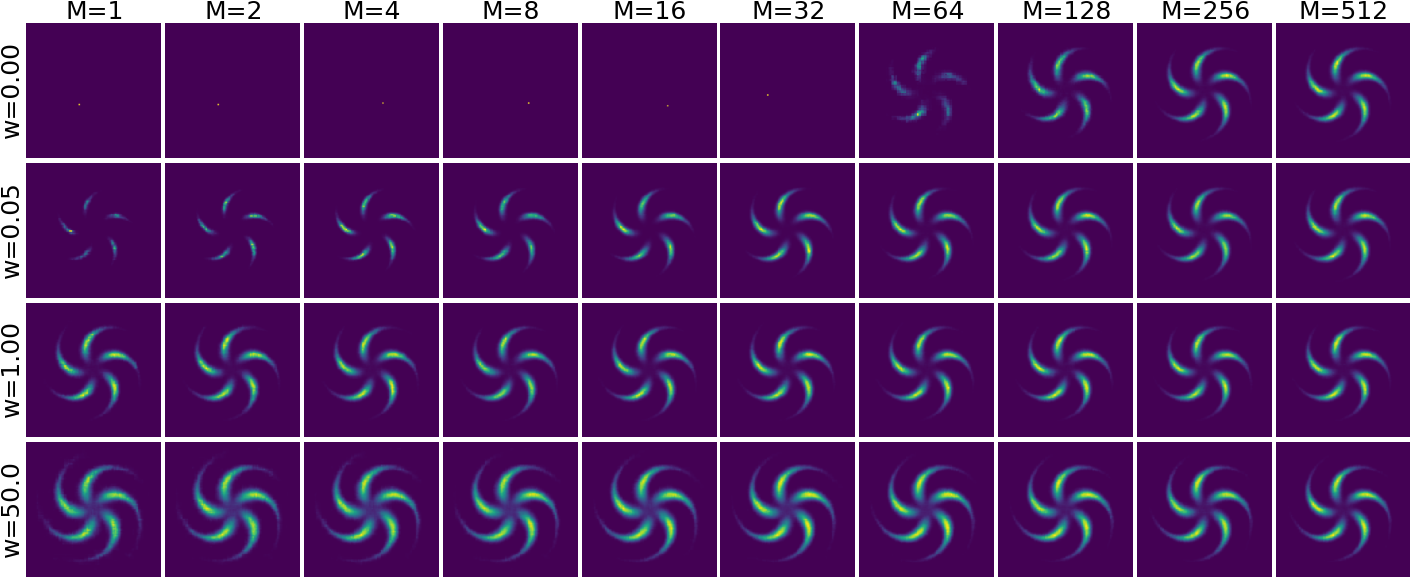}
    \vspace{-4mm}
    \caption{Density estimation results of ED-Bern on the pinwheel with different $w,M$ and $\epsilon=0.1$.}
    \label{fig:toy-understanding-wm-edbern-appendix}
    \vspace{-2mm}
\end{figure}

\begin{figure}[!t]
    \centering
    \includegraphics[width=1.\textwidth]
    {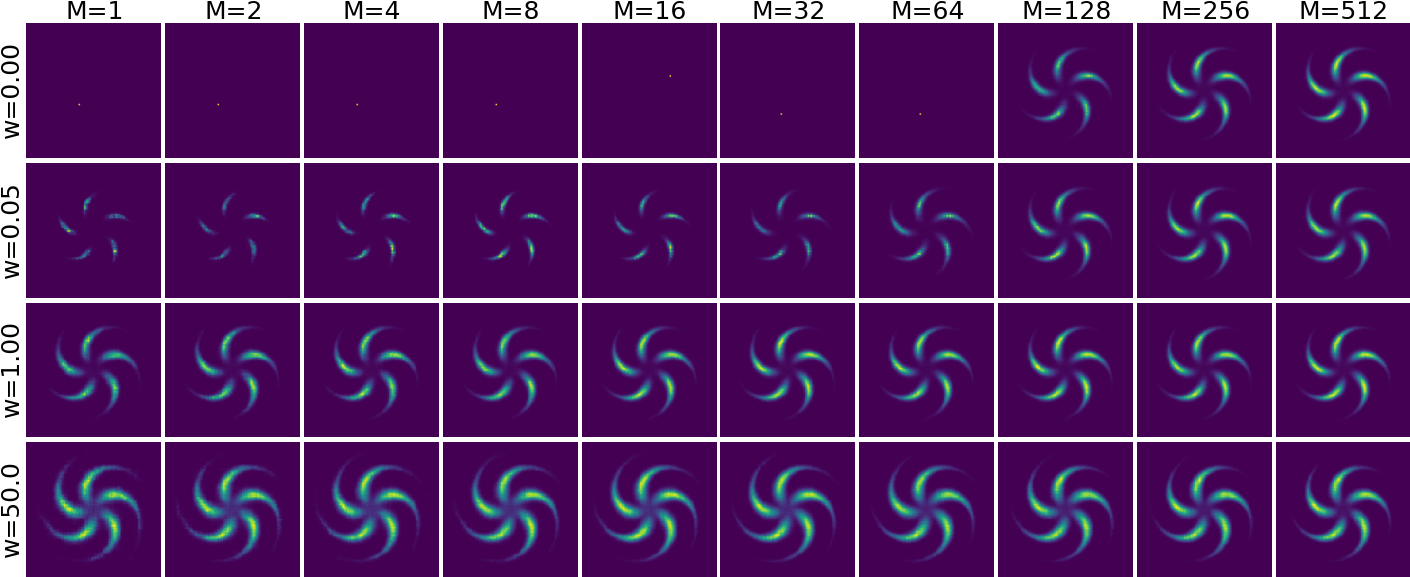}
    \vspace{-4mm}
    \caption{Density estimation results of ED-Pool on the pinwheel with different $w,M$ and the window size is $32 \times 1$.}
    \label{fig:toy-understanding-wm-edpool-appendix}
    \vspace{-2mm}
\end{figure}

\begin{figure}[!t]
    \centering
    \includegraphics[width=1.\textwidth]
    {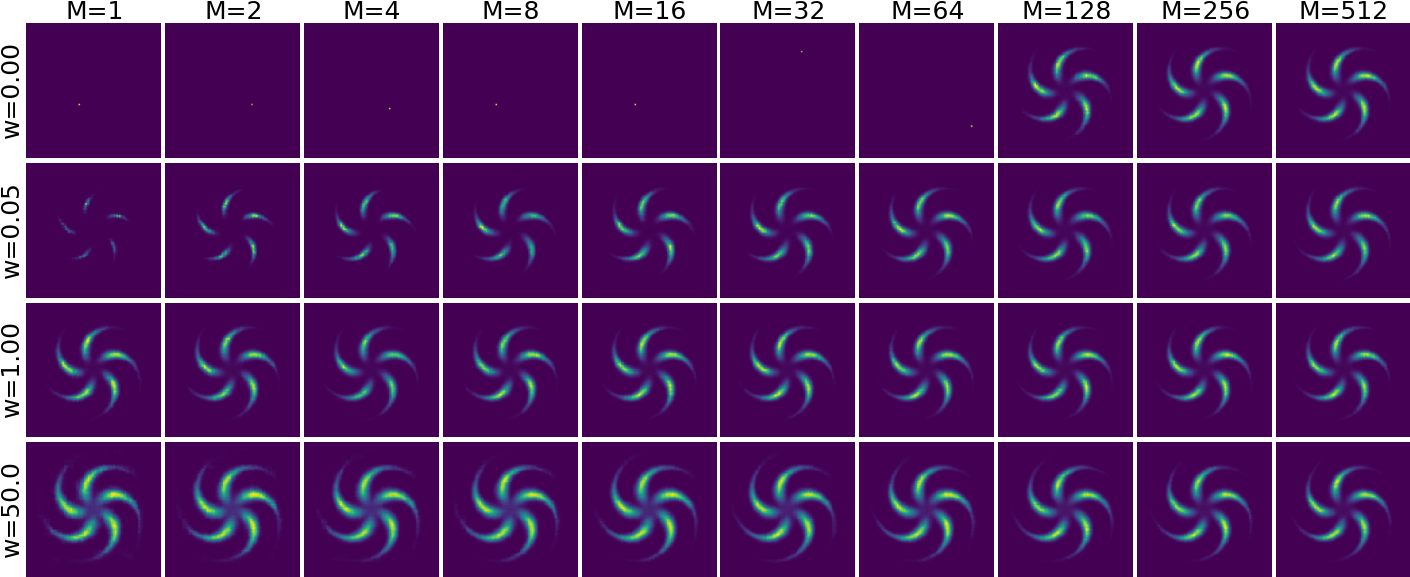}
    \vspace{-4mm}
    \caption{Density estimation results of ED-Grid on the pinwheel with different $w,M$.}
    \label{fig:toy-understanding-wm-edgrid-appendix}
    \vspace{-2mm}
\end{figure}

\subsection{Discrete Image Modelling} \label{appendix-sec-discrete-image-modelling}

\textbf{Experimental Details.}
In this experiment, we parametrise the energy function using ResNet \citep{he2016deep} following the settings in \cite{grathwohl2021oops,zhang2022langevin}, where the network has $8$ residual blocks with $64$ feature maps. Each residual block has $2$ convolutional layers and uses Swish activation function \citep{ramachandran2017searching}. We choose $M=32, w=1$ for all variants of energy discrepancy, $\epsilon=0.001$ for ED-Bern, and the window size is $2\times 2$ for ED-Pool. Note that here we choose a relatively small $\epsilon$ and window size, since we empirically find that the loss of energy discrepancy converges to a constant rapidly with larger $\epsilon$ and window size, which can not provide meaningful gradient information to update the parameters. All models are trained with Adam optimiser with a learning rate of $0.0001$ and a batch size of $100$ for $50,000$ iterations. We perform model evaluation every $5,000$ iterations by conducting Annealed Importance Sampling (AIS) with a discrete Langevin sampler for $10,000$ steps. The reported results are obtained from the model that achieves the best performance on the validation set. After training, we finally report the negative log-likelihood by running $300,000$ iterations of AIS. 

\textbf{Qualitative Results.}
We show the generated images in \cref{fig:sample-ebm-appendix}, which are the samples in the final step of AIS. We see that our methods can generate realistic images on the Omniglot dataset but mediocre images on Caltech Silhouette. We hypothesise that improving the design of the affinity structure in the neighborhood-based transformation can lead to better results. On both the static and dynamic MNIST datasets, ED-Bern and ED-Grid generate diverse and high-quality images. However, ED-Pool experiences mode collapse, resulting in limited variation in the generated samples.

\begin{figure}[!t]
\centering
	\begin{tabular}{cccc}		
	   \includegraphics[width=0.22\textwidth]{figures/image_modelling/ed_bernoulli_stastic_mnist.png}&
        \includegraphics[width=0.22\textwidth]{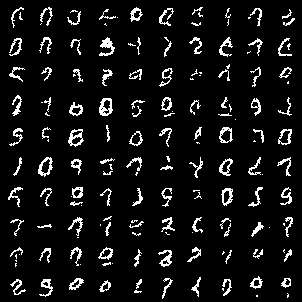}&
        \includegraphics[width=0.22\textwidth]{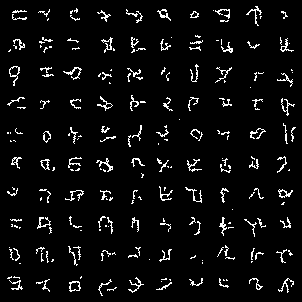}&
        \includegraphics[width=0.22\textwidth]{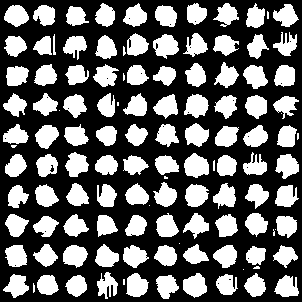} \\
	   \includegraphics[width=0.22\textwidth]{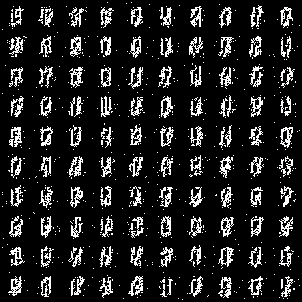}&
        \includegraphics[width=0.22\textwidth]{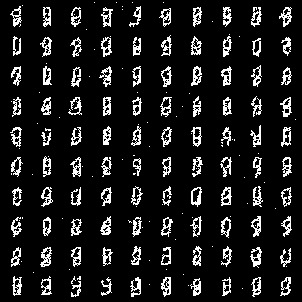}&
        \includegraphics[width=0.22\textwidth]{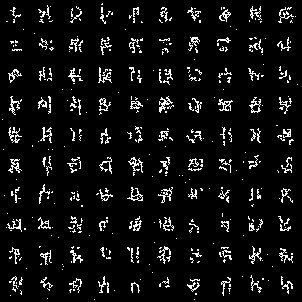}&
        \includegraphics[width=0.22\textwidth]{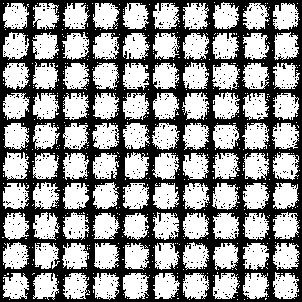} \\
	   \includegraphics[width=0.22\textwidth]{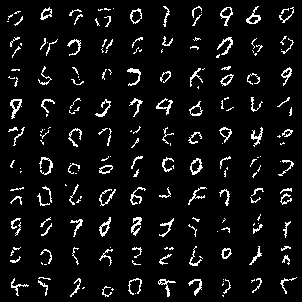}&
        \includegraphics[width=0.22\textwidth]{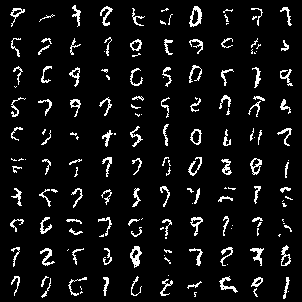}&
        \includegraphics[width=0.22\textwidth]{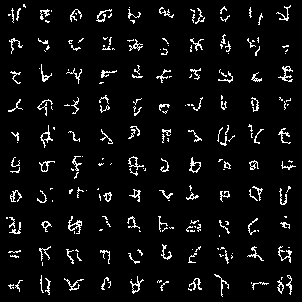}&
        \includegraphics[width=0.22\textwidth]{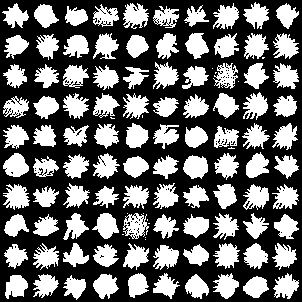} \\
	\end{tabular}
	\caption{Generated samples on discrete image modelling. Left to right: Static MNIST, Dynamic MNIST, Omniglot, Caltech Silhouettes. Top to bottom: ED-Bern, ED-Pool, ED-Grid.}
	\label{fig:sample-ebm-appendix}
\end{figure}

\end{document}